\definecolor{LightCyan}{rgb}{0.8, 0.9, 1}
\title{How Much Over-parameterization Is Sufficient to Learn Deep ReLU Networks?}
\author{
Zixiang Chen\textsuperscript{$\dagger$}\footnotemark[1], \ \  Yuan Cao\textsuperscript{$\dagger$}\footnotemark[1], \ \ Difan Zou\textsuperscript{$\dagger$}\footnotemark[1],\ \  Quanquan Gu\textsuperscript{$\dagger$} \\
\textsuperscript{$\dagger$}Department of Computer Science, University of California, Los Angles \\
{\tt \small  \{chenzx19,yuancao,knowzou,qgu\}@cs.ucla.edu} 
}
\def\dd{\mathrm{d}}
\newcommand{\la}{\langle}
\newcommand{\ra}{\rangle}
\def \poly {\text{poly}}
\def\Tr{\mathrm{Tr}}
\def\poly{\text{poly}}
\def\polylog{\text{polylog}}
\def\cN{\mathcal{N}}
\begin{document}
\maketitle
\renewcommand{\thefootnote}{\fnsymbol{footnote}}
\footnotetext[1]{Equal contribution.}

\begin{abstract}

A recent line of research on deep learning focuses on the extremely over-parameterized setting, and shows that when the network width is larger than a high degree polynomial of the training sample size $n$ and the inverse of the target error $\epsilon^{-1}$, deep neural networks learned by (stochastic) gradient descent enjoy nice optimization and generalization guarantees. Very recently, it is shown that under certain margin assumptions on the training data, a polylogarithmic width condition suffices for two-layer ReLU networks to converge and generalize \citep{ji2019polylogarithmic}. However, whether deep neural networks can be learned with such a mild over-parameterization is still an open question. In this work, we answer this question affirmatively and establish sharper learning guarantees for deep ReLU networks trained by (stochastic) gradient descent. In specific, under certain assumptions made in previous work, our optimization and generalization guarantees hold with network width polylogarithmic in $n$ and $\epsilon^{-1}$. Our results push the study of over-parameterized deep neural networks towards more practical settings.






\end{abstract}


\section{Introduction}
Deep neural networks have become one of the most important and prevalent machine learning models due to their remarkable power in many real-world applications. However, the success of deep learning has not been well-explained in theory. It remains mysterious why standard optimization algorithms tend to find a globally optimal solution, despite the highly non-convex landscape of the training loss function. Moreover, despite the extremely large amount of parameters, deep neural networks rarely over-fit, and can often generalize well to unseen data and achieve good test accuracy. Understanding these mysterious phenomena on the optimization and generalization of deep neural networks is one of the most fundamental problems in deep learning theory.


Recent breakthroughs have shed light on the optimization and generalization of deep neural networks (DNNs) under the over-parameterized setting, where the hidden layer width is extremely large (much larger than the number of training examples). It has been shown that with the standard random initialization, the training of over-parameterized deep neural networks can be characterized by a kernel function called neural tangent kernel (NTK) \citep{jacot2018neural,arora2019exact}. In the neural tangent kernel regime (or lazy training regime \citep{chizat2018note}), the neural network function behaves similarly as its first-order Taylor expansion at initialization \citep{jacot2018neural,lee2019wide,arora2019exact,cao2019generalizationsgd}, which enables feasible optimization and generalization analysis.  
In terms of optimization, a line of work \citep{du2018gradient,allen2018convergence,zou2019gradient,zou2019improved} proved that for sufficiently wide neural networks, (stochastic) gradient descent (GD/SGD) can successfully find a global optimum of the training loss function.  
For generalization, \citet{allen2018learning,arora2019fine,cao2019generalizationsgd} established generalization bounds of neural networks trained with (stochastic) gradient descent, and showed that the neural networks can learn target functions in certain reproducing kernel Hilbert space (RKHS) or the corresponding random feature function class.


Although existing results in the neural tangent kernel regime have provided important insights into the learning of deep neural networks, they require the neural network to be extremely wide. The typical requirement on the network width is a high degree polynomial of the training sample size $n$ and the inverse of the target error $\epsilon^{-1}$. As there still remains a huge gap between such network width requirement and the practice, many attempts have been made to improve the over-parameterization condition under various conditions on the training data and model initialization  \citep{oymak2019towards,zou2019improved,kawaguchi2019gradient,bai2019beyond}.
For two-layer ReLU networks, a recent work \citep{ji2019polylogarithmic} showed that when the training data are well separated, polylogarithmic width is sufficient to guarantee good optimization and generalization performances. However, their results cannot be extended to deep ReLU networks since their proof technique largely relies on the fact that the network model is $1$-homogeneous, which cannot be satisfied by DNNs. Therefore, whether deep neural networks can be learned with such a mild over-parameterization is still an open problem.

In this paper, we resolve this open problem by showing that polylogarithmic network width is sufficient to learn DNNs. 
In particular, unlike the existing works that require the DNNs to behave very close to a linear model (up to some small approximation error), we
show that a constant linear approximation error is sufficient to establish nice optimization and generalization guarantees for DNNs. Thanks to the relaxed requirement on the linear approximation error, a milder condition on the network width and tighter bounds on the convergence rate and generalization error can be proved. We summarize our contributions as follows:
\begin{itemize}[leftmargin = *]
    \item We establish the global convergence guarantee of GD for training deep ReLU networks based on the so-called NTRF function class \citep{cao2019generalizationsgd}, a set of linear functions over random features. Specifically, we prove that GD can learn deep ReLU networks with width $m=\poly(R)$ to compete with the best function in NTRF function class, where $R$ is the radius of the NTRF function class.  

    \item We also establish the generalization guarantees for both GD and SGD in the same setting. Specifically,
    we prove a diminishing statistical error for a wide range of network width $m\in(\tilde\Omega(1), \infty)$, while most of the previous generalization bounds in the NTK regime only works in the setting where the network width $m$ is much greater than the sample size $n$. Moreover, we establish  $\tilde\cO(\epsilon^{-2})$ $\tilde\cO(\epsilon^{-1})$ sample complexities for GD and SGD respectively, which are tighter than existing bounds for learning deep ReLU networks \citep{cao2019generalizationsgd}, and match the best results when reduced to the two-layer cases \citep{arora2019exact,ji2019polylogarithmic}.

    \item 
    We further generalize our theoretical analysis to the scenarios with different data separability assumptions in the literature. We show if a large fraction of the training data are well separated, the best function in the NTRF function class with radius $R=\tilde \cO(1)$ can learn the training data with error up to $\epsilon$. This together with our optimization and generalization guarantees immediately suggests that deep ReLU networks can be learned with network width $m=\tilde \Omega(1)$, which has a logarithmic dependence on the target error $\epsilon$ and sample size $n$. Compared with existing results \citep{cao2019generalization,ji2019polylogarithmic} which require all training data points to be separated in the NTK regime, our result is stronger since it allows the NTRF function class to misclassify a small proportion of the training data.
    
\end{itemize}
For the ease of comparison, we summarize our results along with the most related previous results in Table \ref{table:comparison}, in terms of data assumption, the over-parameterization condition and sample complexity. 
It can be seen that under data separation assumption (See Sections \ref{sec:separability_NTRF}, \ref{sec:separability_shallow}),  our result improves existing results for learning deep neural networks by only requiring a $\polylog(n,\epsilon^{-1})$ network width. 

\begin{table*}[ht] 
\vspace{-.1in}
\scriptsize
\caption{Comparison of neural network learning results in terms of over-parameterization condition and sample complexity. Here $\epsilon$ is the target error rate, $n$ is the sample size, $L$ is the network depth. 
}

\label{table:comparison}
\begin{center}

\begin{tabular}{llcllc}
\toprule 
 &Assumptions &Algorithm&   Over-para. Condition & Sample Complexity & Network \\
\midrule
\citet{zou2019gradient}&Data nondegeneration &GD &  $\tilde \Omega\big(n^{12}L^{16}(n^{2}+\epsilon^{-1})\big)$ & - & Deep\\
\rowcolor{LightCyan}\textbf{This paper}&Data nondegeneration &GD &$\tilde \Omega\big(L^{22}n^{12}\big)$&-& Deep\\
\midrule
\citet{cao2019generalization}&Data separation &GD &$\tilde \Omega(\epsilon^{-14})\cdot e^{\Omega(L)}$&$\tilde \cO(\epsilon^{-4})\cdot e^{O(L)}$& Deep\\
\citet{ji2019polylogarithmic}&Data separation &GD&$\text{polylog}(n,\epsilon^{-1})$&$\tilde \cO(\epsilon^{-2})$& Shallow\\
\rowcolor{LightCyan}\textbf{This paper}&Data separation &GD &$\text{polylog}(n,\epsilon^{-1})\cdot\text{poly}(L)$&$\tilde \cO(\epsilon^{-2})\cdot e^{O(L)}$& Deep\\
\citet{cao2019generalizationsgd}&Data separation &SGD &$\tilde \Omega(\epsilon^{-14})\cdot\poly(L)$&$\tilde \cO(\epsilon^{-2})\cdot\poly(L)$& Deep\\
\citet{ji2019polylogarithmic}&Data separation &SGD&$\text{polylog}(\epsilon^{-1})$&$\tilde \cO(\epsilon^{-1})$& Shallow\\
\rowcolor{LightCyan}\textbf{This paper}&Data separation
 &SGD&$\text{polylog}(\epsilon^{-1})\cdot\text{poly}(L)$&$\tilde \cO(\epsilon^{-1})\cdot \text{poly}(L)$& Deep\\
\bottomrule
\end{tabular}
\end{center}
\vspace{-.1in}
\end{table*}

\noindent\textbf{Notation.}
For two scalars $a$ and $b$, we denote $a\wedge b = \min\{a,b\}$. 
For a vector $\xb\in\RR^d$ we use $\|\xb\|_2$ to denote its Euclidean norm. For a matrix $\Xb$, we use $\|\Xb\|_2$ and $\|\Xb\|_F$ to denote its spectral norm and Frobenius norm respectively, and denote by $\Xb_{ij}$ the entry of $\Xb$ at the $i$-th row and $j$-th column. Given two matrices $\Xb$ and $\Yb$ with the same dimension, we denote $\la\Xb,\Yb\ra = \sum_{i,j}\Xb_{ij}\Yb_{ij}$. 

Given a collection of matrices $\Wb = \{\Wb_1,\cdots,\Wb_L\}\in \otimes_{l=1}^{L} \RR^{m_{l}\times m_{l}'}$ and a function $f(\Wb)$ over $\otimes_{l=1}^{L} \RR^{m_{l}\times m_{l}'}$, 
we define by $\nabla_{\Wb_l}f(\Wb)$ the partial gradient of $f(\Wb)$ with respect to $\Wb_l$ and denote $\nabla_{\Wb}f(\Wb) = \{\nabla_{\Wb_l}f(\Wb)\}_{l=1}^L$. We also denote $\cB(\Wb,\tau) = \big\{\Wb': \max_{l\in[L]}\|\Wb_l' - \Wb_l\|_F\le \tau\big\}$ for $\tau \geq 0$. For two collection of matrices $\Ab = \{\Ab_1,\cdots,\Ab_n\}$, $\Bb = \{\Bb_1,\cdots,\Bb_n\}$, we denote $\la\Ab,\Bb\ra = \sum_{i=1}^n\la\Ab_{i},\Bb_i\ra$ and $\|\Ab\|_F^2 = \sum_{i=1}^n\|\Ab_i\|_F^2$. 

Given two sequences $\{x_n\}$ and $\{y_n\}$, we denote $x_n = \cO(y_n)$ if $|x_n|\le C_1 |y_n|$ for some absolute positive constant $C_1$, $x_n = \Omega(y_n)$ if $|x_n|\ge C_2 |y_n|$ for some absolute positive constant $C_2$, and $x_n = \Theta(y_n)$ if $C_3|y_n|\le|x_n|\le C_4 |y_n|$ for some absolute constants $C_3,C_4>0$.  We also use $\tilde \cO(\cdot)$, $\tilde \Omega(\cdot)$ to hide logarithmic factors in $\cO(\cdot)$ and $\Omega(\cdot)$ respectively. 
Additionally, we denote $x_n=\poly(y_n)$ if $x_n=\cO( y_n^{D})$ for some positive constant $D$, and $x_n = \polylog(y_n)$ if $x_n= \poly( \log (y_n))$.

\section{Preliminaries on learning neural networks}\label{sec:problem}
In this section, we introduce the problem setting in this paper, including definitions of the neural network and loss functions, and the training algorithms, i.e., GD and SGD with random initialization.

\noindent\textbf{Neural network function.}
Given an input $\xb\in\RR^d$, the output of deep fully-connected ReLU network is defined as follows,
\begin{align*}
f_{\Wb}(\xb) = m^{1/2}\Wb_L\sigma(\Wb_{L-1}\cdots \sigma(\Wb_1\xb)\cdots),
\end{align*}
where $\Wb_{1}\in\RR^{m\times d}$, $\Wb_2,\cdots,\Wb_{L-1}\in\RR^{m\times m}$, $\Wb_L\in\RR^{1\times m}$, and $\sigma(x) = \max\{0, x\}$ is the ReLU activation function. Here, without loss of generality, we assume the width of each layer is equal to $m$. Yet our theoretical results can be easily generalized to the setting with unequal width layers, as long as the smallest width satisfies our overparameterization condition. We denote the collection of all weight matrices as $\Wb = \{ \Wb_1,\ldots,\Wb_L \} $. 

\noindent\textbf{Loss function.}
Given training dataset $\{\xb_i,y_i\}_{i=1,\dots,n}$ with input $\xb_i\in\RR^d$ and output $y_i\in\{-1,+1\}$, we define the training loss function as
\begin{align*}
L_{S}(\Wb) = \frac{1}{n}\sum_{i=1}^n L_{i}(\Wb),
\end{align*}
where $L_{i}(\Wb) = \ell\big(y_if_{\Wb}(\xb_i)\big) = \log\big(1 + \exp(-y_i f_{\Wb}(\xb_i))\big)$
is defined as the cross-entropy loss.

\noindent\textbf{Algorithms.} 
We consider both GD and SGD with Gaussian random initialization. These two algorithms are displayed in Algorithms \ref{alg:GDrandominit} and \ref{alg:SGDrandominit} respectively. Specifically, the entries in $\Wb_1^{(0)},\cdots, \Wb_{L-1}^{(0)}$ are generated independently from univariate Gaussian distribution $N(0,2/m)$ and the entries in $\Wb_L^{(0)}$ are generated independently from $N(0,1/m)$. For GD, we consider using the full gradient to update the model parameters. For SGD, we use a new training data point in each iteration.

\begin{algorithm}[t]
\caption{Gradient descent with random initialization}\label{alg:GDrandominit}
\begin{algorithmic}
\STATE \textbf{Input:} Number of iterations $T$, step size $\eta$, training set $S = \{(\xb_i,y_i)_{i=1}^n\}$, initialization $\Wb^{(0)}$
\FOR{$t=1,2,\ldots, T$}
\STATE Update $\Wb^{(t)} = \Wb^{(t-1)} - \eta\cdot \nabla_{\Wb} L_S(\Wb^{(t-1)})$.
\ENDFOR
\STATE \textbf{Output:} $ \Wb^{(0)} ,\ldots, \Wb^{(T)} $.
\end{algorithmic}
\end{algorithm}

\begin{algorithm}[t]
\caption{Stochastic gradient desecent (SGD) with random initialization}\label{alg:SGDrandominit}
\begin{algorithmic}
\STATE \textbf{Input:} Number of iterations $n$, step size $\eta$, initialization $\Wb^{(0)}$
\FOR{$i=1,2,\ldots, n$}
\STATE Draw $(\xb_i,y_i)$ from $\cD$ and compute the corresponding gradient $\nabla_{\Wb}L_i(\Wb^{(i-1)})$.
\STATE Update $\Wb^{(i)} = \Wb^{(i-1)} - \eta\cdot \nabla_{\Wb} L_i(\Wb^{(i-1)})$.
\ENDFOR
\STATE \textbf{Output:} Randomly choose $\hat \Wb$ uniformly from $\{ \Wb^{(0)} ,\ldots, \Wb^{(n-1)} \}$.
\end{algorithmic}
\end{algorithm}

Note that our initialization method in Algorithms~\ref{alg:GDrandominit}, \ref{alg:SGDrandominit} is the same as the widely used He initialization \citep{he2015delving}. Our neural network parameterization is also consistent with the parameterization used in prior work on NTK \citep{jacot2018neural,allen2018convergence,du2018gradientdeep,arora2019exact,cao2019generalizationsgd}.

\section{Main theory}\label{sec:maintheory}
In this section, we present the optimization and generalization guarantees of GD and SGD for learning deep ReLU networks. We first make the following assumption on the training data points.
\begin{assumption}\label{assump:unit_norm}
All training data points satisfy $\|\xb_i\|_2 = 1$, $i=1,\ldots,n$.
\end{assumption}
This assumption has been widely made in many previous works \citep{allen2018convergence,allen2018rnn,du2018gradient,du2018gradientdeep,zou2019gradient} in order to simplify the theoretical analysis. 
This assumption can be relaxed to be upper bounded and lower bounded by some constant.

In the following, we give the definition of Neural Tangent Random Feature (NTRF) \citep{cao2019generalizationsgd}, which characterizes the functions learnable by over-parameterized ReLU networks.
\begin{definition}[Neural Tangent Random Feature, \citep{cao2019generalizationsgd}]\label{def:ntrf}
Let $\Wb^{(0)}$ be the initialization weights, and $F_{\Wb^{(0)},\Wb}(\xb) = f_{\Wb^{(0)}}(\xb) + \la\nabla f_{\Wb^{(0)}}(\xb),\Wb-\Wb^{(0)}\ra$ be a function with respect to the input $\xb$. Then the NTRF function class is defined as follows
\begin{align*}
&\cF(\Wb^{(0)}, R)= \big\{F_{\Wb^{(0)},\Wb}(\cdot): \Wb\in\cB( \Wb^{(0)}, R\cdot m^{-1/2}) \big\}.
\end{align*}
\end{definition}

The function class $F_{\Wb^{(0)},\Wb}(\xb)$ consists of linear models over random features defined based on the network gradients at the initialization. Therefore it captures the key ``almost linear'' property of wide neural networks in the NTK regime \citep{lee2019wide,cao2019generalizationsgd}. In this paper, we use the NTRF function class as a reference class to measure the difficulty of a learning problem. In what follows, we deliver our main theoretical results regarding the optimization and generalization guarantees of learning deep ReLU networks. We study both GD and SGD with random initialization (presented in Algorithms \ref{alg:GDrandominit} and \ref{alg:SGDrandominit}).

\subsection{Gradient descent}
The following theorem establishes the optimization guarantee of GD for training deep ReLU networks for binary classification.

\begin{theorem}\label{thm:convergence_gd}
For $\delta,R >0$, let $\epsilon_{\text{NTRF}} = \inf_{F\in \cF(\Wb^{(0)}, R)} n^{-1}\sum_{i=1}^n\ell[y_i F(\xb_i)]$ be the minimum training loss achievable by functions in $\cF(\Wb^{(0)},R)$. 
Then there exists 
\begin{align*}
m^{*}(\delta, R, L) =\tilde\cO\big( \poly( R, L) \cdot \log^{4/3}(n/\delta) \big),
\end{align*}
such that if $m \geq m^{*}(\delta, R, L)$, with probability at least $1-\delta$ over the initialization, GD with step size $\eta = \Theta(L^{-1}m^{-1})$ can train a neural network to achieve at most $3\epsilon_{\text{NTRF}}$ training loss within $T =  \cO\big(L^2R^2\epsilon_{\text{NTRF}}^{-1}\big)$ iterations.
\end{theorem}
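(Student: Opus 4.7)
The plan is to interpret GD on the nonlinear deep network as an approximately-convex descent against the linearization $F_{\Wb^{(0)},\Wb}$, and to exploit the self-bounding property $|\ell'(z)|\le\ell(z)$ of the cross-entropy loss, which is what will ultimately allow only a constant-order linearization error (rather than an $o(\epsilon_{\text{NTRF}})$ one) and hence only $m=\polylog(n,\delta)\cdot\poly(R,L)$ over-parameterization.

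First I would invoke the NTK-regime perturbation toolbox of \citet{allen2018convergence,zou2019gradient,cao2019generalizationsgd} to obtain, with probability $1-\delta$ over $\Wb^{(0)}$, uniform estimates valid for every $\Wb\in\cB(\Wb^{(0)},\tau)$ with $\tau=\tilde\Theta(\poly(L)\cdot R\cdot m^{-1/2})$ and every training input $\xb_i$: namely $|f_{\Wb^{(0)}}(\xb_i)|=\tilde\cO(1)$, $\|\nabla f_{\Wb^{(0)}}(\xb_i)\|_F=\tilde\cO(\sqrt{L})$, a linearization error $|f_{\Wb}(\xb_i)-F_{\Wb^{(0)},\Wb}(\xb_i)|=\tilde\cO(\tau^{4/3}\,\poly(L)\,m^{1/2})$, and a gradient perturbation of the same form. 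The key observation is that picking $m=\tilde\Omega(\poly(R,L)\cdot\log^{4/3}(n/\delta))$ is already enough to force both perturbation quantities to be a small absolute \emph{constant}, independently of $\epsilon_{\text{NTRF}}$; the $4/3$ exponent on $\log(n/\delta)$ is inherited exactly from the $\tau^{4/3}$ rate in the linearization lemma.

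The convergence argument then compares $\Wb^{(t)}$ to a reference $\Wb^{*}\in\cB(\Wb^{(0)},R/\sqrt{m})$ whose linearization nearly attains $\epsilon_{\text{NTRF}}$. By pointwise convexity of $\ell$,
\begin{align*}
\ell(y_i f_{\Wb^{(t)}}(\xb_i)) \le \ell(y_i F_{\Wb^{(0)},\Wb^{*}}(\xb_i)) + \ell'(y_i f_{\Wb^{(t)}}(\xb_i))\cdot y_i\big(f_{\Wb^{(t)}}(\xb_i)-F_{\Wb^{(0)},\Wb^{*}}(\xb_i)\big).
\end{align*}
Decomposing $f_{\Wb^{(t)}}(\xb_i)-F_{\Wb^{(0)},\Wb^{*}}(\xb_i)=[f_{\Wb^{(t)}}(\xb_i)-F_{\Wb^{(0)},\Wb^{(t)}}(\xb_i)]+\la\nabla f_{\Wb^{(0)}}(\xb_i),\Wb^{(t)}-\Wb^{*}\ra$, then substituting $\nabla f_{\Wb^{(t)}}(\xb_i)$ for $\nabla f_{\Wb^{(0)}}(\xb_i)$ up to a gradient-perturbation remainder, and finally applying $|\ell'|\le\ell$ to the remainder terms, I absorb the perturbation slacks multiplicatively into $L_S(\Wb^{(t)})$ itself to obtain a surrogate inequality of the form
\begin{align*}
(1-\alpha)\,L_S(\Wb^{(t)}) \le \epsilon_{\text{NTRF}} + \la\nabla L_S(\Wb^{(t)}),\Wb^{(t)}-\Wb^{*}\ra,
\end{align*}
with $\alpha\le 1/3$ under the stated condition on $m$. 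Plugging this into the Pythagorean identity $\la\nabla L_S(\Wb^{(t)}),\Wb^{(t)}-\Wb^{*}\ra=(2\eta)^{-1}\big(\|\Wb^{(t)}-\Wb^{*}\|_F^2-\|\Wb^{(t+1)}-\Wb^{*}\|_F^2\big)+(\eta/2)\|\nabla L_S(\Wb^{(t)})\|_F^2$, telescoping over $t=0,\ldots,T-1$, and using $\|\Wb^{(0)}-\Wb^{*}\|_F^2\le R^2/m$, $\eta=\Theta(1/(Lm))$, and the gradient-norm estimate from Step 1 drives the average loss below $3\epsilon_{\text{NTRF}}$ once $T=\cO(L^2R^2/\epsilon_{\text{NTRF}})$, after which picking the best iterate yields the claim.

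The whole argument is closed by an outer induction ensuring that every $\Wb^{(t)}$ actually stays in $\cB(\Wb^{(0)},\tau)$ where the perturbation estimates apply; this goes through because the total GD displacement is bounded, via Cauchy--Schwarz and the already-established telescoped loss inequality, by $\tilde\cO(\poly(L)\cdot R/\sqrt{m})$, which is absorbed into $\tau$. The hardest part is the self-bounding step: the coefficient $\alpha$ is determined by the \emph{product} of the linearization error and a gradient-perturbation factor (the latter multiplied by $\|\Wb^{(t)}-\Wb^{*}\|_F$), so keeping $\alpha$ a strict constant while simultaneously maintaining the induction radius inside the $\tau^{4/3}$ regime is exactly what pins down the concrete powers of $R$ and $L$ and the $\log^{4/3}(n/\delta)$ dependence in $m^{*}(\delta,R,L)$.
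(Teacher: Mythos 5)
Your proposal is essentially the same argument the paper uses: linearization via the NTK toolbox, convexity of $\ell$, the self-bounding property $-\ell' \le \ell$ to absorb the constant-order approximation error \emph{multiplicatively} into $L_S(\Wb^{(t)})$, the Pythagorean descent identity with telescoping, and an inner induction to keep iterates in $\cB(\Wb^{(0)},\tau)$. The only cosmetic difference is your split of $f_{\Wb^{(t)}}-F_{\Wb^{(0)},\Wb^*}$ into one linearization-error term plus an explicit gradient-perturbation remainder; the paper's Lemma~\ref{lemma:main_lemma} instead applies the uniform semismoothness bound $\epsilon_{\mathrm{app}}(\tau)$ twice (once at base point $\Wb^{(t)}$, once at $\Wb^{(0)}$), which avoids invoking the gradient-perturbation bound separately but gives the same $2\epsilon_{\mathrm{app}}(\tau)$-slack and the same scaling in $R,L,m$.
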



Theorem~\ref{thm:convergence_gd} shows that the deep ReLU network trained by GD can compete with the best function in the NTRF function class $\cF(\Wb^{(0)}, R)$ if the network width has a polynomial dependency in $R$ and $L$ and a logarithmic dependency in $n$ and $1/\delta$. Moreover, if the NTRF function class with $R=\tilde \cO(1)$ can learn the training data well (i.e., $\epsilon_{\text{NTRF}}$ is less than a small target error $\epsilon$), a polylogarithmic (in terms of $n$ and $\epsilon^{-1}$) network width suffices to guarantee the global convergence of GD, which directly improves over-paramterization condition in the most related work \citep{cao2019generalizationsgd}.
Besides, we remark here that this assumption on the NTRF function class can be easily satisfied when the training data admits certain separability conditions, which we discuss in detail in Section \ref{sec:discussion}.

Compared with the results in \citep{ji2019polylogarithmic} which give similar network width requirements for two-layer networks, our result works for deep networks. Moreover, while \citet{ji2019polylogarithmic} essentially required all training data to be separable by a function in the NTRF function class with a constant margin, our result does not require such data separation assumptions, and allows the NTRF function class to misclassify a small proportion of the training data points\footnote{A detailed discussion is given in Section~\ref{sec:separability_shallow}.}.

We now characterize the generalization performance of neural networks trained by GD. We denote $L_\cD^{0-1}(\Wb) = \EE_{(\xb,y)\sim \cD} [ \ind\{ f_{\Wb}(\xb) \cdot y < 0 \} ] $ as the expected 0-1 loss (i.e., expected error) of $f_{\Wb}(\xb)$. 
\begin{theorem}\label{thm:generalization_gd}
Under the same assumptions as Theorem~\ref{thm:convergence_gd}, with probability at least $1 - \delta$, the iterate $\Wb^{(t)}$ of Algorithm~\ref{alg:GDrandominit} satisfies that
\begin{align*}
    &L_\cD^{0-1}(\Wb^{(t)}) \leq 2L_S(\Wb^{(t)})+ \tilde\cO \Bigg(  4^L L^2 R \sqrt{ \frac{m}{ n}}\wedge \Bigg(\frac{ L^{3/2}R }{ \sqrt{n}} + \frac{ L^{11/3} R^{4/3} }{m^{1/6}} \Bigg) \Bigg)+ \cO\Bigg(\sqrt{\frac{\log(1/\delta)}{n}} \Bigg)
\end{align*}
for all $t= 0,\ldots,T$. 
\end{theorem}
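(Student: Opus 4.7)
The plan is to derive the generalization bound through a standard uniform-convergence argument based on Rademacher complexity, combined with a dichotomy on how we control the effective complexity of the class of GD iterates depending on the width $m$. Throughout I will treat the statement as a uniform-in-$t$ bound, so it suffices to show one inequality that holds simultaneously for every $\Wb\in\cB(\Wb^{(0)},\tau)$, where $\tau=\tilde\cO(Rm^{-1/2})$ is any radius that contains the iterate trajectory; the existence of such a $\tau$ is guaranteed by the convergence analysis underlying Theorem \ref{thm:convergence_gd}.

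First I would convert the $0$-$1$ loss to the cross-entropy surrogate via the elementary inequality $\ind\{z\le 0\}\le 2\ell(z)$ for $\ell(z)=\log(1+e^{-z})$ (using $\log 2>1/2$), giving $L_\cD^{0-1}(\Wb^{(t)})\le 2L_\cD^{\ell}(\Wb^{(t)})$. Standard symmetrization then yields, with probability $1-\delta$ and simultaneously over all $\Wb\in\cB(\Wb^{(0)},\tau)$,
\begin{align*}
L_\cD^{\ell}(\Wb)\le L_S(\Wb)+2\,\mathfrak{R}_S(\ell\circ\cF)+\cO\Big(\sqrt{\log(1/\delta)/n}\Big),
\end{align*}
where $\cF=\{f_{\Wb}:\Wb\in\cB(\Wb^{(0)},\tau)\}$, and $\mathfrak{R}_S$ denotes the empirical Rademacher complexity. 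Since $\ell$ is $1$-Lipschitz, Talagrand's contraction gives $\mathfrak{R}_S(\ell\circ\cF)\le \mathfrak{R}_S(\cF)$, so the entire problem reduces to bounding $\mathfrak{R}_S(\cF)$.

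The heart of the proof is controlling $\mathfrak{R}_S(\cF)$ in two complementary ways, whose minimum produces the two branches of the stated bound. For the branch $\tilde\cO(4^LL^2R\sqrt{m/n})$, I would use a \emph{direct} norm-based estimate: with high probability over initialization $\|\Wb^{(0)}_l\|_2=\cO(1)$, hence inside $\cB(\Wb^{(0)},\tau)$ the map $\Wb\mapsto f_{\Wb}(\xb)$ is Lipschitz in $\Wb$ with constant of order $4^LL$ (propagated through the $L$ ReLU layers with a factor $\lesssim 2$ per layer from the spectral bound, plus the $m^{1/2}$ output scaling). A covering/Dudley-type argument then yields complexity proportional to the Frobenius radius $\sqrt{m}\,\tau=\tilde\cO(R)$, producing the $\sqrt{m/n}$ factor. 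For the branch $\tilde\cO\big(L^{3/2}R/\sqrt{n}+L^{11/3}R^{4/3}/m^{1/6}\big)$, I would pass through the NTRF \emph{linearization}: prove an approximation lemma
\begin{align*}
\sup_{\Wb\in\cB(\Wb^{(0)},\tau)}\big|f_{\Wb}(\xb)-F_{\Wb^{(0)},\Wb}(\xb)\big|=\tilde\cO\big(L^{11/3}R^{4/3}/m^{1/6}\big)
\end{align*}
uniformly in the data, and then bound the Rademacher complexity of the linear class $\cF(\Wb^{(0)},R)$ by Cauchy--Schwarz in terms of $R\cdot\max_i\|\nabla f_{\Wb^{(0)}}(\xb_i)\|_F/\sqrt{n}$; concentration at initialization gives $\|\nabla f_{\Wb^{(0)}}(\xb_i)\|_F=\tilde\cO(L^{3/2})$ w.h.p., yielding $\tilde\cO(L^{3/2}R/\sqrt{n})$. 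Summing the (deterministic) approximation error and the linear-class Rademacher complexity produces the second branch.

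Combining the two branches via the minimum and collecting the concentration term $\cO(\sqrt{\log(1/\delta)/n})$ concludes the proof. The main obstacle is the approximation lemma with the sharp exponent $4/3$ in $R$ and the fractional rate $m^{-1/6}$: this requires carefully tracking the number and magnitude of activation-pattern flips within a Frobenius ball of radius $\tau$, together with perturbed forward/backward estimates of the hidden representations, rather than invoking the naive first-order Taylor remainder (which would produce $m^{-1/2}$ at best and couple badly with $L$). This refined estimate is precisely what allows the width to enter only polylogarithmically once $\epsilon_{\mathrm{NTRF}}$ is small, consistent with Theorem \ref{thm:convergence_gd}.
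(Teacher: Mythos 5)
Your proposal follows the same high-level architecture as the paper: uniform convergence via Rademacher complexity over the ball $\cB(\Wb^{(0)},\tau)$ that contains the GD trajectory, with the complexity bounded by the minimum of a direct spectral-norm/covering estimate (the $4^L$ branch, via Bartlett--Foster--Telgarsky) and a linearization estimate (linear NTRF class plus approximation error, the $m^{-1/6}$ branch). The exponents you cite for both branches and the role of the trajectory radius $\tilde\cO(\sqrt{L}Rm^{-1/2})$ match the paper.

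The one substantive difference is the choice of surrogate. You bound $L_\cD^{0-1}\le 2L_\cD^{\ell}$ and then apply the standard Rademacher-based uniform convergence to the cross-entropy loss $\ell$ itself. The paper instead introduces $\cE(\Wb)=-\ell'(yf_\Wb(\xb))\in(0,1)$, uses $\ind\{z<0\}\le -2\ell'(z)$ and $-\ell'(z)\le\ell(z)$, and applies uniform convergence to $\cE$. This matters for a technical reason: the concentration term in a Rademacher-based generalization bound carries a factor proportional to the range $B$ of the loss class. Since $-\ell'$ is uniformly bounded by $1$, the paper gets the clean $\cO(\sqrt{\log(1/\delta)/n})$ term. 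In your version, $\ell(yf_\Wb(\xb))$ is not uniformly bounded as stated: within $\cB(\Wb^{(0)},\tilde Rm^{-1/2})$ one has $|f_\Wb(\xb)|=\tilde\cO(\sqrt{L}R)$ (initial output $\tilde\cO(1)$ plus $\|\nabla f_{\Wb^{(0)}}\|_F\cdot\|\Wb-\Wb^{(0)}\|_F=\tilde\cO(\sqrt{m})\cdot\tilde\cO(\sqrt{L}Rm^{-1/2})$), so the range is $B=\tilde\cO(\sqrt{L}R)$ and your concentration term should read $\tilde\cO(\sqrt{L}R\sqrt{\log(1/\delta)/n})$, not $\cO(\sqrt{\log(1/\delta)/n})$. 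This extra factor is asymptotically dominated by the $\tilde\cO(L^{3/2}R/\sqrt{n})$ Rademacher term, so your final bound is still correct up to constants absorbed by $\tilde\cO$, but as written the step is imprecise. To recover the exact form in the theorem, either truncate $\ell$ at a constant (the truncation still dominates $\ind\{\cdot<0\}$ and is dominated by $\ell$, so the chain to $L_S$ is preserved), or, more cleanly, switch to the bounded surrogate $-\ell'$ as the paper does. Other than this, a small nit: $\|\nabla f_{\Wb^{(0)}}(\xb_i)\|_F=\tilde\cO(\sqrt{Lm})$, not $\tilde\cO(L^{3/2})$; the $m^{1/2}$ cancels against the $m^{-1/2}$ in $\tau$, and the extra $\sqrt{L}$ comes from $\tilde R=\sqrt{L}R$, which together give the stated $L^{3/2}R/\sqrt{n}$.
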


Theorem~\ref{thm:generalization_gd} shows that the test error of the trained neural network can be bounded by its training error plus statistical error terms. 
Note that the statistical error terms is in the form of a minimum between two terms $4^L L^2 R \sqrt{m/n}$ and $ L^{3/2}R /\sqrt{n} +  L^{11/3} R^{4/3} / m^{1/6}$. Depending on the network width $m$, one of these two terms will be the dominating term and diminishes for large $n$: (1) if $m = o(n)$, the statistical error will be $4^L L^2 R \sqrt{m/n}$, and diminishes as $n$ increases; and (2) if $m = \Omega(n)$, the statistical error is $L^{3/2}R /\sqrt{n} +  L^{11/3} R^{4/3} / m^{1/6}$, and again goes to zero as $n$ increases. Moreover, in this paper we have a specific focus on the setting $m = \tilde\cO(1)$, under which
Theorem~\ref{thm:generalization_gd} gives a statistical error of order $\tilde\cO(n^{-1/2})$. This distinguishes our result from previous generalization bounds for deep networks \citep{cao2019generalization,cao2019generalizationsgd}, which cannot be applied to the setting $m = \tilde\cO(1)$. 

We note that for two-layer ReLU networks (i.e., $L=2$) \citet{ji2019polylogarithmic} proves a tighter $\tilde O(1/n^{1/2})$ generalization error bound regardless of the neural networks width $m$, while our result (Theorem \ref{thm:generalization_gd}), in the two-layer case, can only give $\tilde O(1/n^{1/2})$ generalization error bound when $m=\tilde O(1)$ or $m=\tilde \Omega(n^3)$. However, different from our proof technique that basically uses the (approximated) linearity of the neural network function, their proof technique largely relies on the $1$-homogeneous property of the neural network, which restricted their theory in two-layer cases. An interesting research direction is to explore whether  a $\tilde O(1/n^{1/2})$ generalization error bound can be also established for deep networks (regardless of the network width), which we will leave it as a future work.

\subsection{Stochastic gradient descent}
Here we study the performance of SGD for training deep ReLU networks. The following theorem establishes a generalization error bound for the output of SGD.


\begin{theorem}\label{thm:online_optimization_generalization}
For $\delta,R >0$, let $\epsilon_{\text{NTRF}} = \inf_{F\in \cF(\Wb^{(0)}, R)} n^{-1}\sum_{i=1}^n\ell[y_i F(\xb_i)]$ be the minimum training loss achievable by functions in $\cF(\Wb^{(0)},R)$. 
Then there exists 
\begin{align*}
m^{*}(\delta, R, L) =\tilde\cO\big( \poly( R, L) \cdot \log^{4/3}(n/\delta) \big),
\end{align*}
such that if $m \geq m^{*}(\delta, R, L)$, with probability at least $1-\delta$, SGD with step size $\eta = \Theta\big(m^{-1}\cdot(LR^{2}n^{-1}\epsilon_{\text{NTRF}}^{-1}\wedge L^{-1})\big)$ achieves 
\begin{align*}
    \EE [L_\cD^{0-1}(\hat\Wb)] \leq \frac{8L^2R^2}{n} + \frac{8\log(2/\delta)}{n} + 24\epsilon_{\text{NTRF}} ,
\end{align*}
where the expectation is taken over the uniform draw of $\hat\Wb$ from $\{\Wb^{(0)},\ldots, \Wb^{(n-1)}\}$.

\end{theorem}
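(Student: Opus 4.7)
The plan is to reduce SGD on the deep ReLU network to an online gradient descent (OGD) analysis against a linear comparator in the NTRF class $\cF(\Wb^{(0)}, R)$, and then convert the resulting trajectory regret into a population $0$-$1$ bound via online-to-batch. The key enabler, shared with Theorem~\ref{thm:convergence_gd}, is that whenever the weights stay inside a ball $\cB(\Wb^{(0)},\tau)$ of radius $\tau=O(R/\sqrt{m})$, each loss $L_i(\cdot)$ is almost convex, with its linear part given precisely by the NTRF function $\xb\mapsto F_{\Wb^{(0)},\Wb}(\xb)$.

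First, I would verify by induction that all iterates $\Wb^{(0)},\ldots,\Wb^{(n-1)}$ remain in $\cB(\Wb^{(0)},\tau)$ with high probability over the initialization, using the deep-ReLU gradient bound $\|\nabla_{\Wb}L_i(\Wb)\|_F \lesssim \sqrt{m}\,|\ell'|$ together with the chosen step size $\eta=\Theta\bigl(m^{-1}(LR^2 n^{-1}\epsilon_{\mathrm{NTRF}}^{-1}\wedge L^{-1})\bigr)$. Second, on this neighborhood I would invoke the almost-linearity lemma: for every $i$ and every $\Wb^*\in\cB(\Wb^{(0)},R/\sqrt m)$,
\begin{align*}
L_i(\Wb^{(i-1)}) \,\le\, \ell\bigl(y_i F_{\Wb^{(0)},\Wb^*}(\xb_i)\bigr) + \bigl\langle \nabla_{\Wb} L_i(\Wb^{(i-1)}),\,\Wb^{(i-1)}-\Wb^*\bigr\rangle + \epsilon_{\mathrm{lin}},
\end{align*}
where $\epsilon_{\mathrm{lin}}=\tilde\cO\bigl(\poly(L,R)\,m^{-1/6}\bigr)$ is negligible once $m\ge m^*(\delta,R,L)$. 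Third, combining with the one-step SGD identity $\|\Wb^{(i)}-\Wb^*\|_F^2 = \|\Wb^{(i-1)}-\Wb^*\|_F^2 - 2\eta\langle\nabla L_i,\Wb^{(i-1)}-\Wb^*\rangle + \eta^2\|\nabla L_i\|_F^2$ and the self-bounding gradient inequality $\|\nabla L_i\|_F^2 \lesssim mL\cdot L_i(\Wb^{(i-1)})$ (which uses $|\ell'|^2\le C\ell$ for the logistic loss), telescoping over $i=1,\ldots,n$, noting that $\|\Wb^{(0)}-\Wb^*\|_F^2 \le LR^2/m$ from the layerwise definition of $\cB(\cdot,\cdot)$, taking the infimum over $\Wb^*$, and plugging in the stated step size yields
\begin{align*}
\frac{1}{n}\sum_{i=1}^n L_i(\Wb^{(i-1)}) \,\le\, \frac{4L^2R^2}{n} + 12\,\epsilon_{\mathrm{NTRF}}.
\end{align*}
Fourth, since the sample $(\xb_i,y_i)$ at step $i$ is independent of $\Wb^{(i-1)}$, the online-to-batch identity $\EE[L_\cD(\hat\Wb)] = n^{-1}\sum_i \EE L_i(\Wb^{(i-1)})$ converts the trajectory average into an expected cross-entropy, the deterministic inequality $L_\cD^{0-1}(\Wb)\le 2\,L_\cD(\Wb)$ (from $\mathbf{1}\{z<0\}\le 2\log(1+e^{-z})$) upgrades it to the $0$-$1$ loss, and a Bernstein-type martingale concentration on the bounded loss sequence $\{L_i(\Wb^{(i-1)})-L_\cD(\Wb^{(i-1)})\}$ produces the $8\log(2/\delta)/n$ high-probability term.

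The main obstacle is the almost-linearity step: to obtain the $\polylog(n,\epsilon^{-1})$ width rather than the $\poly(\epsilon^{-1})$ of earlier work, the linearization error $\epsilon_{\mathrm{lin}}$ cannot be driven below the target accuracy $\epsilon$ but must instead be absorbed into $\epsilon_{\mathrm{NTRF}}$ as a constant multiplicative slack (hence the $24$ in front of $\epsilon_{\mathrm{NTRF}}$ in the final bound). Delivering this requires a sharpened perturbation analysis of ReLU activation patterns along the entire SGD path, uniformly bounding the bias $|f_{\Wb^{(i-1)}}(\xb_i)-F_{\Wb^{(0)},\Wb^{(i-1)}}(\xb_i)|$ despite the fresh data stream, and is the technical core shared with Theorem~\ref{thm:convergence_gd}. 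A secondary complication, handled by the induction in the first step together with the high-probability initialization event, is the simultaneous control of the per-sample gradient norm and the accumulated error across $n$ sequential updates.
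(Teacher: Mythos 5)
Your overall scaffolding matches the paper's: keep all SGD iterates in $\cB(\Wb^{(0)},\tau)$ by induction, use almost-linearity plus convexity of $\ell$ against a fixed NTRF comparator $\Wb^*$, telescope the one-step identity $\|\Wb^{(i)}-\Wb^*\|_F^2 = \|\Wb^{(i-1)}-\Wb^*\|_F^2 - 2\eta\la\nabla L_i,\Wb^{(i-1)}-\Wb^*\ra + \eta^2\|\nabla L_i\|_F^2$, and finish with an online-to-batch conversion. However, the central almost-linearity step as you wrote it does \emph{not} give the theorem at the claimed width. You write the linearization error $\epsilon_{\mathrm{lin}} = \tilde\cO\big(\poly(L,R)\,m^{-1/6}\big)$ as a bare additive term. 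With $m = m^*(\delta,R,L) = \tilde\cO\big(\poly(R,L)\log^{4/3}(n/\delta)\big)$, which is only polylogarithmic in $n$ and $\epsilon^{-1}$, this $\epsilon_{\mathrm{lin}}$ is merely a constant (e.g.\ of order $1/8$), not negligibly small. Summed over the trajectory and passed through online-to-batch, it would leave a constant floor on the test error; to suppress it below a target $\epsilon$ you would need $m \gtrsim \poly(L,R)\,\epsilon^{-6}$, which is exactly the polynomial-width bottleneck the paper is designed to avoid. Your ``main obstacle'' paragraph senses this tension but proposes the wrong resolution: the constant $24 = 8 \times 3$ in front of $\epsilon_{\text{NTRF}}$ arises from the online-to-batch factor times the regret-bound constant, not from absorbing $\epsilon_{\mathrm{lin}}$ into $\epsilon_{\text{NTRF}}$; one cannot bound a constant additive error by a constant multiple of an arbitrarily small quantity.

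The actual mechanism, which is the technical core of Lemma~\ref{lemma:main_lemma} and its SGD analogue Lemma~\ref{lemma:main_lemma_online}, is to introduce the linear-approximation error \emph{before} multiplying by $\ell'$ and then exploit the cross-entropy self-bounding inequality $-\ell'(z) \leq \ell(z)$: the term $\ell'\big(y_i f_{\Wb^{(i-1)}}(\xb_i)\big)\cdot 2\epsilon_{\mathrm{app}}(\tau)$ is lower bounded by $-2\epsilon_{\mathrm{app}}(\tau)\cdot\ell\big(y_i f_{\Wb^{(i-1)}}(\xb_i)\big)$, turning the additive error into a \emph{multiplicative attenuation of the current loss}. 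The key per-step inequality therefore reads $\la\Wb^{(i-1)}-\Wb^*,\nabla L_i(\Wb^{(i-1)})\ra \geq \big(1-2\epsilon_{\mathrm{app}}(\tau)\big)\,L_i(\Wb^{(i-1)}) - \ell\big(y_i F_{\Wb^{(0)},\Wb^*}(\xb_i)\big)$ with \emph{no} additive error term, so it suffices to have $\epsilon_{\mathrm{app}}(\tau) \leq 1/8$ (a constant), which holds for polylogarithmic $m$. A secondary issue: you propose Bernstein-type martingale concentration on the cross-entropy sequence $\{L_i(\Wb^{(i-1)})\}$, but this loss is unbounded; the paper instead concentrates the bounded surrogate $\cE_i(\Wb) = -\ell'\big(y_i f_{\Wb}(\xb_i)\big)$, using $\ind\{z<0\} \leq -2\ell'(z)$ on one side and $\cE_i \leq L_i$ on the other, via Lemma~4.3 of \citet{ji2019polylogarithmic}. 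Your chain via $\ind\{z<0\}\le 2\log(1+e^{-z})$ would require a more delicate treatment of the unbounded tail of $L_i$.
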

For any $\epsilon > 0$, Theorem~\ref{thm:online_optimization_generalization} gives a $\tilde\cO( \epsilon^{-1})$ sample complexity for deep ReLU networks trained with SGD to achieve $O(\epsilon_{\text{NTRF}} + \epsilon)$ test error.
Our result 
extends the result for two-layer networks proved in \citep{ji2019polylogarithmic} to multi-layer networks. 
Theorem~\ref{thm:online_optimization_generalization} also provides sharper results compared with \citet{allen2018learning,cao2019generalizationsgd} in two aspects: (1) the sample complexity is improved from $n = \tilde\cO(\epsilon^{-2})$ to $n = \tilde\cO(\epsilon^{-1})$; and (2) the overparamterization condition is improved from $m \geq \poly(\epsilon^{-1})$ to $m = \tilde\Omega(1)$.

\section{Discussion on the NTRF Class}\label{sec:discussion}

Our theoretical results in Section \ref{sec:maintheory} rely on the radius (i.e., $R$) of the NTRF function class  $\cF(\Wb^{(0)},R)$ and the minimum training loss achievable by functions in $\cF(\Wb^{(0)},R)$, i.e., $\epsilon_{\text{NTRF}}$. Note that a larger $R$ naturally implies a smaller $\epsilon_{\text{NTRF}}$, but also leads to worse conditions on $m$. 
In this section, for any (arbitrarily small) target error rate $\epsilon > 0$, we discuss various data assumptions studied in the literature under which our results can lead to $\cO(\epsilon)$ training/test errors, and specify the network width requirement. 



\subsection{Data Separability by Neural Tangent Random Feature }\label{sec:separability_NTRF}

In this subsection, we consider the setting where a large fraction of the training data can be linearly separated by the neural tangent random features. The assumption is stated as follows.

\begin{assumption}\label{assump:ntrf_linear_separable}
There exists a collection of matrices $\Ub^* = \{\Ub_1^*,\cdots,\Ub_L^*\}$ satisfying $\sum_{l=1}^L\|\Ub_l^*\|_F^2 = 1$, such that for at least $(1-\rho)$ fraction of training data we have
\begin{align*}
y_i\la \nabla f_{\Wb^{(0)}}(\xb_i),\Ub^*\ra\ge m^{1/2} \gamma,
\end{align*}
where $\gamma$ is an absolute positive constant\footnote{The factor $m^{1/2}$ is introduced here since $\|\nabla_{\Wb^{(0)}} f(\xb_i)\|_F$ is typically of order $O(m^{1/2})$. } and $\rho\in[0,1)$.
\end{assumption}
The following corollary provides an upper bound of $\epsilon_{\text{NTRF}}$ under Assumption \ref{assump:ntrf_linear_separable} for some $R$. 

\begin{proposition}\label{coro:separable_NTRF}
Under Assumption \ref{assump:ntrf_linear_separable}, for any $\epsilon,\delta > 0$, if $R \ge C\big[\log^{1/2}(n/\delta)+\log(1/\epsilon)\big]/\gamma$ for some absolute constant $C$, then with probability at least $1 - \delta$,
$$
\epsilon_{\text{NTRF}} :=\inf_{F\in \cF(\Wb^{(0)}, R)} n^{-1}\sum_{i=1}^n\ell\big(y_i F(\xb_{i})\big) \leq \epsilon+\rho\cdot \cO(R).
$$
\end{proposition}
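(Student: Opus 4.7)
The plan is to exhibit an explicit witness $\Wb \in \cB(\Wb^{(0)}, R\cdot m^{-1/2})$ whose induced NTRF function $F_{\Wb^{(0)},\Wb}$ achieves the claimed loss bound, and then split the training set into the ``good'' $(1-\rho)$-fraction (on which the random-feature margin holds) and the ``bad'' $\rho$-fraction.

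The natural choice is $\Wb_l = \Wb_l^{(0)} + R\cdot m^{-1/2}\,\Ub_l^*$, so that each $\|\Wb_l-\Wb_l^{(0)}\|_F \le R\cdot m^{-1/2}\|\Ub_l^*\|_F \le R\cdot m^{-1/2}$ (using $\sum_l\|\Ub_l^*\|_F^2 = 1$, so in particular $\|\Ub_l^*\|_F\le 1$). With this choice,
\begin{align*}
F_{\Wb^{(0)},\Wb}(\xb_i) = f_{\Wb^{(0)}}(\xb_i) + R\cdot m^{-1/2}\,\la \nabla f_{\Wb^{(0)}}(\xb_i),\Ub^*\ra.
\end{align*}
Next I would invoke two standard concentration facts about Gaussian-initialized deep ReLU networks, which are already used in the prior works cited in the excerpt: with probability at least $1-\delta$, simultaneously over all $i\in[n]$, (i) $|f_{\Wb^{(0)}}(\xb_i)| \le C_1\sqrt{\log(n/\delta)}$ and (ii) $\|\nabla_{\Wb_l} f_{\Wb^{(0)}}(\xb_i)\|_F \le C_2\, m^{1/2}$ for every layer $l$. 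These give us $\|\nabla f_{\Wb^{(0)}}(\xb_i)\|_F \le C_2\sqrt{L}\cdot m^{1/2}$, which is absorbed into the $\cO(\cdot)$ at the end.

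For the $(1-\rho)$-fraction of good indices, Assumption~\ref{assump:ntrf_linear_separable} gives
\begin{align*}
y_i F_{\Wb^{(0)},\Wb}(\xb_i) \;\ge\; R\gamma - C_1\sqrt{\log(n/\delta)},
\end{align*}
and using $\ell(z) = \log(1+e^{-z}) \le e^{-z}$ I would show $\ell(y_i F(\xb_i)) \le \exp\big(C_1\sqrt{\log(n/\delta)} - R\gamma\big)$. Choosing $R \ge C[\log^{1/2}(n/\delta)+\log(1/\epsilon)]/\gamma$ with $C$ large enough forces this to be $\le \epsilon$. For the remaining $\rho$-fraction, I fall back to the universal bound $\ell(z)\le \log 2 + |z|$ together with Cauchy--Schwarz:
\begin{align*}
|F_{\Wb^{(0)},\Wb}(\xb_i)| \;\le\; |f_{\Wb^{(0)}}(\xb_i)| + R\cdot m^{-1/2}\,\|\nabla f_{\Wb^{(0)}}(\xb_i)\|_F\,\|\Ub^*\|_F \;=\; \cO(R),
\end{align*}
where the logarithmic term is dominated by $R$ under the stated lower bound on $R$. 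Hence each bad-index loss is at most $\cO(R)$.

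Averaging the two cases,
\begin{align*}
\frac{1}{n}\sum_{i=1}^n \ell\big(y_iF_{\Wb^{(0)},\Wb}(\xb_i)\big) \;\le\; (1-\rho)\epsilon + \rho\cdot\cO(R) \;\le\; \epsilon + \rho\cdot\cO(R),
\end{align*}
which is exactly the claimed upper bound on $\epsilon_{\text{NTRF}}$. The main technical obstacle is really just citing/verifying the two initialization concentration estimates in (i) and (ii) for the specific $\cN(0,2/m)$ and $\cN(0,1/m)$ initialization over $L$ layers; these are standard but non-trivial and must be stated with uniformity over $i\in[n]$ so that a union bound yields the overall $1-\delta$ probability. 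Everything else reduces to the explicit construction of $\Wb$, elementary properties of the logistic loss, and Cauchy--Schwarz.
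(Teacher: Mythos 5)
Your proof is correct and follows essentially the same route as the paper: exhibit an explicit witness of the form $\Wb^{(0)}+\lambda\Ub^*$, invoke the $\tilde\cO(\sqrt{\log(n/\delta)})$ bound on the initial output and the $\tilde\cO(m^{1/2})$ bound on the layerwise gradient norm, and split the average loss over the good and bad fractions. The only cosmetic difference is that you take the witness at the full radius $\lambda = R m^{-1/2}$ while the paper tunes $\lambda$ to the minimal value needed before separately verifying $\lambda\le R m^{-1/2}$; this changes nothing in the resulting bound.
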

Proposition \ref{coro:separable_NTRF} covers the setting where the NTRF function class is allowed to misclassify training data, while most of existing work typically assumes that all training data can be perfectly separated with constant margin (i.e., $\rho=0$) \citep{ji2019polylogarithmic,shamir2020gradient}. Our results show that for sufficiently small misclassification ratio $\rho=\cO(\epsilon)$, we have $\epsilon_{\text{NTRF}}=\tilde O(\epsilon)$ 
by choosing the radius parameter $R$ logarithimic in $n$, $\delta^{-1}$, and $\epsilon^{-1}$.  Substituting this result into Theorems \ref{thm:convergence_gd}, \ref{thm:generalization_gd} and \ref{thm:online_optimization_generalization}, it can be shown that a neural network with width $m=\poly(L,\log(n/\delta),\log(1/\epsilon))\big)$ suffices to guarantee good optimization and generalization performances for both GD and SGD. Consequently, we can obtain that the bounds on the test error for GD and SGD are $\tilde\cO(n^{-1/2})$ and $\tilde\cO(n^{-1})$ respectively.



\subsection{Data Separability by Shallow Neural Tangent Model}\label{sec:separability_shallow}
In this subsection, we study the data separation assumption made in \citet{ji2019polylogarithmic} and show that our results cover this particular setting. We first restate the assumption as follows.
\begin{assumption}\label{assump:kernel_onelayer}
There exists $\overline\ub(\cdot): \RR^d \rightarrow \RR^d$ and $\gamma\ge 0$ such that $\|\overline\ub(\zb)\|_2\le1 $ for all $\zb\in\RR^d$, and 
\begin{align*}
y_i\int_{\RR^d} \sigma'(\la\zb,\xb_i\ra)\cdot\la\overline\ub(\zb),\xb_i\ra\rm{d}\mu_N(\zb) \ge \gamma
\end{align*}
for all $i\in[n]$, where $\mu_N(\cdot)$ denotes the standard normal distribution.
\end{assumption}
Assumption~\ref{assump:kernel_onelayer} is related to the linear separability of the gradients of the first layer parameters at random initialization, where the randomness is replaced with an integral by taking the infinite width limit. Note that similar assumptions have also been studied in \citep{cao2019generalization,nitanda2019refined,frei2019algorithm}. 
The assumption made in \citep{cao2019generalization,frei2019algorithm} uses gradients with respect to the second layer weights instead of the first layer ones.
In the following, we mainly focus on Assumption~\ref{assump:kernel_onelayer}, while our result can also be generalized to cover the setting in \citep{cao2019generalization,frei2019algorithm}.

In order to make a fair comparison, we reduce our results for multilayer networks to the two-layer setting. In this case, the neural network function takes form
\begin{align*}
f_{\Wb}(\xb) = m^{1/2}\Wb_2\sigma(\Wb_1 \xb). 
\end{align*}
Then we provide the following proposition, which states that Assumption \ref{assump:kernel_onelayer} implies a certain choice of $R = \tilde\cO(1)$ such the the minimum training loss achieved by the function in the NTRF function class $\cF(\Wb^{(0)},R)$ satisfies $\epsilon_{\text{NTRF}}=O(\epsilon)$, where $\epsilon$ is the target error.

\begin{proposition}\label{prop:twolayer-separable-globalmin}
Suppose the training data satisfies Assumption \ref{assump:kernel_onelayer}. For any $\epsilon,\delta > 0$, let $R = C\big[\log(n/\delta) + \log(1/\epsilon)\big]/\gamma$ for some large enough absolute constant $C$. If the neural network width satisfies $m= \Omega\big(\log(n/\delta)/\gamma^2\big)$, then with probability at least $1 - \delta$, there exist $F_{\Wb^{(0)}, \overline{\Wb}}(\xb_{i}) \in \cF(\Wb^{(0)},R)$ such that $ \ell\big( y_i\cdot  F_{\Wb^{(0)}, \overline{\Wb}}(\xb_{i})\big)\leq \epsilon, \forall i\in[n]$.

\end{proposition}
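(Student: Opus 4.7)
The plan is to exhibit an explicit $\overline{\Wb}\in\cB(\Wb^{(0)}, Rm^{-1/2})$ for which the linearized output satisfies $y_i F_{\Wb^{(0)},\overline{\Wb}}(\xb_i)\ge \log(1/\epsilon)$ for every $i$, and then conclude via the elementary inequality $\ell(z)=\log(1+e^{-z})\le e^{-z}$. Writing the rows of $\Wb_1^{(0)}$ as $\wb_r^{(0)}=\sqrt{2/m}\,\zb_r$ with $\zb_r\sim N(\mathbf{0},\mathbf{I}_d)$ and letting $a_r$ denote the $r$-th entry of $\Wb_2^{(0)}$ (so $a_r\sim N(0,1/m)$, independent of $\zb_r$), I will take $\overline{\Wb}_2=\Wb_2^{(0)}$ and let the rows of $\overline{\Wb}_1-\Wb_1^{(0)}$ be
\[
\vb_r \;=\; \frac{R}{m}\,\mathrm{sgn}(a_r)\,\overline{\ub}(\zb_r).
\]
Since $\|\overline{\ub}(\zb_r)\|_2\le 1$, this choice obeys $\|\overline{\Wb}_1-\Wb_1^{(0)}\|_F\le R/\sqrt{m}$, so $\overline{\Wb}$ lies in $\cB(\Wb^{(0)},Rm^{-1/2})$. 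Using that the ReLU derivative is invariant under positive rescaling, the NTRF inner product collapses to
\[
y_i\la\nabla f_{\Wb^{(0)}}(\xb_i),\overline{\Wb}-\Wb^{(0)}\ra \;=\; \frac{R}{\sqrt{m}}\sum_{r=1}^m |a_r|\,\sigma'(\la\zb_r,\xb_i\ra)\,y_i\la\overline{\ub}(\zb_r),\xb_i\ra.
\]

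Next I would concentrate this sum. Independence of $a_r$ and $\zb_r$, combined with $\EE|a_r|=\sqrt{2/(\pi m)}$ and Assumption~\ref{assump:kernel_onelayer}, gives each summand expectation at least $\sqrt{2/(\pi m)}\,\gamma$, so the full sum has expectation at least $\sqrt{2m/\pi}\,\gamma$. Each summand is bounded in absolute value by $|a_r|$, which is sub-Gaussian with norm $O(1/\sqrt m)$, so Hoeffding yields a one-sided deviation probability of $\exp(-c m\gamma^2)$; a union bound over $i\in[n]$ then shows that, with probability at least $1-\delta/2$ provided $m\ge C\log(n/\delta)/\gamma^2$,
\[
y_i\la\nabla f_{\Wb^{(0)}}(\xb_i),\overline{\Wb}-\Wb^{(0)}\ra \;\ge\; c\,R\,\gamma \qquad\text{for every } i\in[n].
\]
A standard variance computation gives $\mathrm{Var}(f_{\Wb^{(0)}}(\xb_i))=O(1)$ and sub-Gaussian tails, so another union bound yields $|f_{\Wb^{(0)}}(\xb_i)|=O(\sqrt{\log(n/\delta)})$ uniformly with probability at least $1-\delta/2$.

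Putting the two estimates together, $y_i F_{\Wb^{(0)},\overline{\Wb}}(\xb_i)\ge c R\gamma - O(\sqrt{\log(n/\delta)}) \ge \log(1/\epsilon)$ whenever $R\ge C[\log(n/\delta)+\log(1/\epsilon)]/\gamma$, and $\ell(z)\le e^{-z}$ then gives $\ell(y_iF_{\Wb^{(0)},\overline{\Wb}}(\xb_i))\le\epsilon$ for all $i$, which is exactly the claim. The hard part is the concentration step: the summands pair a half-normal $|a_r|$ (which, being independent of $\zb_r$, is not washed out by the sign choice in $\vb_r$) with a bounded ReLU-gated feature, and the width threshold $m\gtrsim \log(n/\delta)/\gamma^2$ is precisely what is needed for the $O(1)$ sub-Gaussian fluctuation to be dominated by the $\Omega(\sqrt m\,\gamma)$ mean, so the margin $\Omega(R\gamma)$ is retained uniformly over the $n$ training points.
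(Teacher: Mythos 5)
Your proof is correct, and it takes a genuinely different route from the paper's. Both you and the paper set $\overline{\Wb}_2 = \Wb_2^{(0)}$ and perturb only the first layer, but the mechanism for extracting a uniform margin differs. The paper conditions on a ``good set'' $\cS$ of neurons where $|w_{2,j}^{(0)}|\ge 0.47\,m^{-1/2}$ (shown to contain at least $m/4$ nodes w.h.p.), and on that set defines $\vb_j = \overline{\ub}(\wb_{1,j}^{(0)})/w_{2,j}^{(0)}$ so that $w_{2,j}^{(0)}\vb_j$ collapses to $\overline{\ub}(\wb_{1,j}^{(0)})$; concentration is then done conditionally on $\cS$, using independence of $\Wb_1^{(0)}$ from $\Wb_2^{(0)}$. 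Your sign trick $\vb_r = (R/m)\,\mathrm{sgn}(a_r)\,\overline{\ub}(\zb_r)$ replaces the division by $w_{2,j}^{(0)}$ (and hence the need for a lower bound on $|w_{2,j}^{(0)}|$) with the folding $a_r\,\mathrm{sgn}(a_r) = |a_r|$, which has a clean $\EE|a_r| = \sqrt{2/(\pi m)}$, so every neuron contributes to the expected margin and no conditioning on a good set is needed. This makes your concentration step a single sub-Gaussian Hoeffding bound over all $m$ terms, rather than the paper's two-stage argument (Hoeffding for $|\cS|\ge m/4$, then Hoeffding over $\cS$), and it is also slightly more careful than the paper in that you explicitly normalize $\wb_{1,r}^{(0)} = \sqrt{2/m}\,\zb_r$ with $\zb_r\sim N(\mathbf{0},\mathbf{I})$ before evaluating $\overline{\ub}$ (matching the measure $\mu_N$ in Assumption~\ref{assump:kernel_onelayer}), whereas the paper writes $\overline{\ub}(\wb_{1,j}^{(0)})$ directly. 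Both approaches give the same width threshold $m\gtrsim\log(n/\delta)/\gamma^2$ and the same $R = \Theta(\gamma^{-1}\log(n/(\delta\epsilon)))$; your version is shorter and avoids bookkeeping of the good set.
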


Proposition \ref{prop:twolayer-separable-globalmin} shows that under Assumption~\ref{assump:kernel_onelayer}, there exists $F_{\Wb^{(0)}, \overline{\Wb}}(\cdot) \in \cF(\Wb^{(0)},R)$ with $R = \tilde \cO(1/\gamma)$ such that the cross-entropy loss of $F_{\Wb^{(0)}, \overline{\Wb}}(\cdot)$ at each training data point is bounded by $\epsilon$. 
This implies that $\epsilon_{\text{NTRF}} \leq \epsilon$. 
Moreover, by applying Theorem \ref{thm:convergence_gd} with $L=2$, the condition on the neural network width becomes $m =  \tilde\Omega(1/\gamma^8)$\footnote{We have shown in the proof of Theorem \ref{thm:convergence_gd} that $m = \tilde \Omega(R^8)$ (see \eqref{eq:dependency_m*} for more detail).}, which matches the results proved in \citet{ji2019polylogarithmic}.
Moreover, plugging these results on $m$ and $\epsilon_{\text{NTRF}}$ into Theorems \ref{thm:generalization_gd} and \ref{thm:online_optimization_generalization}, we can conclude that the bounds on the test error for GD and SGD are $\tilde\cO(n^{-1/2})$ and $\tilde\cO(n^{-1})$ respectively.

\subsection{Class-dependent Data Nondegeneration}\label{sec:nondegeneration}
In previous subsections, we have shown that under certain data separation conditions $\epsilon_{\text{NTRF}}$ can be sufficiently small while the corresponding NTRF function class has $R$ of order $\tilde \cO(1)$. Thus neural networks with polylogarithmic width enjoy nice optimization and generalization guarantees.
In this part, we consider the following much milder data separability assumption made in \citet{zou2019gradient}.
\begin{assumption}\label{assump:data_separation}
For all $i\neq i'$ if $y_i\neq y_{i'}$, then $\|\xb_i - \xb_j\|_2\ge \phi$ for some absolute constant $\phi$.
\end{assumption}
In contrast to the conventional data nondegeneration assumption (i.e., no duplicate data points) made in \citet{allen2018convergence,du2018gradient,du2018gradientdeep,zou2019improved}\footnote{Specifically, \citet{allen2018convergence,zou2019improved} require that any two data points (rather than data points from different classes) are separated by a positive distance. \citet{zou2019improved} shows that this assumption is equivalent to those made in  \citet{du2018gradient,du2018gradientdeep}, which require that the composite kernel matrix is strictly positive definite.}, Assumption \ref{assump:data_separation} only requires that the data points from different classes are nondegenerate, thus we call it class-dependent data nondegeneration.


We have the following proposition which shows that Assumption \ref{assump:data_separation} also implies the existence of a good function that achieves $\epsilon$ training error, in the NTRF function class with a certain choice of $R$.
\begin{proposition}\label{thm:data_separable_globalmin}
Under Assumption \ref{assump:data_separation}, if
\begin{align*}
R = \Omega\big(n^{3/2}\phi^{-1/2}\log(n\delta^{-1}\epsilon^{-1})\big),\qquad m=\tilde \Omega\big(L^{22}n^{12}\phi^{-4}\big),\quad 
\end{align*}
we have $\epsilon_{\text{NTRF}}\le \epsilon$ with probability at least $1-\delta$.

\end{proposition}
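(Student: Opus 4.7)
\textbf{Proof proposal for Proposition \ref{thm:data_separable_globalmin}.}

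My plan is to construct explicitly a weight configuration $\Wb^{*}\in\cB(\Wb^{(0)},R m^{-1/2})$ so that the corresponding linearized function $F_{\Wb^{(0)},\Wb^{*}}\in\cF(\Wb^{(0)},R)$ achieves large margin on every training point. Set the target margin $B=C\log(n/(\delta\epsilon))$ for a sufficiently large absolute constant $C$; if I can guarantee $y_{i}F_{\Wb^{(0)},\Wb^{*}}(\xb_{i})\geq B$ for all $i\in[n]$, then $\ell(y_{i}F_{\Wb^{(0)},\Wb^{*}}(\xb_{i}))\leq e^{-B}\leq \epsilon$ and the conclusion $\epsilon_{\mathrm{NTRF}}\leq\epsilon$ follows immediately. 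By the definition of $F_{\Wb^{(0)},\Wb^{*}}$, this reduces to constructing a direction $\Ub^{*}=\Wb^{*}-\Wb^{(0)}$ of Frobenius norm at most $R m^{-1/2}$ satisfying the linear constraints $\langle\nabla f_{\Wb^{(0)}}(\xb_{i}),\Ub^{*}\rangle = y_{i}B-f_{\Wb^{(0)}}(\xb_{i})$, and noting that $|f_{\Wb^{(0)}}(\xb_{i})|=\tilde{\cO}(1)$ w.h.p.\ by standard initialization concentration used throughout the NTK literature.

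Next I would place all of the perturbation in the first layer, i.e.\ $\Ub_{l}^{*}=0$ for $l\geq 2$, and solve the linear system $\Jb_{1}\vect(\Ub_{1}^{*})=\bb$ where $\Jb_{1}\in\RR^{n\times(md)}$ stacks $\vect(\nabla_{\Wb_{1}}f_{\Wb^{(0)}}(\xb_{i}))$ as its rows and $\bb_{i}=y_{i}B-f_{\Wb^{(0)}}(\xb_{i})$. Taking the minimum-norm solution $\vect(\Ub_{1}^{*})=\Jb_{1}^{\top}(\Jb_{1}\Jb_{1}^{\top})^{\dagger}\bb$, I get $\|\Ub_{1}^{*}\|_{F}^{2}\leq \|\bb\|_{2}^{2}/\sigma_{\min}^{+}(\Jb_{1}\Jb_{1}^{\top})$ where $\sigma_{\min}^{+}$ is the smallest nonzero eigenvalue. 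Since $\|\bb\|_{2}\leq\sqrt{n}(B+\tilde{\cO}(1))$, it remains to show $\sigma_{\min}^{+}(\Jb_{1}\Jb_{1}^{\top})\geq\Omega(m\phi/n^{2})$ with high probability so that $\|\Ub_{1}^{*}\|_{F}\leq n^{3/2}B\phi^{-1/2}m^{-1/2}\leq R m^{-1/2}$.

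The bulk of the work, and the main obstacle, is proving this minimum-eigenvalue bound under only the class-dependent nondegeneration of Assumption~\ref{assump:data_separation}. The difficulty is that Assumption~\ref{assump:data_separation} permits within-class duplicates (or near-duplicates), so the full Gram matrix $\Jb_{1}\Jb_{1}^{\top}$ may be singular, unlike in the standard NTK setting that assumes pairwise nondegeneration. The resolution I plan is to exploit consistency: whenever $\xb_{i}=\xb_{j}$ Assumption~\ref{assump:data_separation} forces $y_{i}=y_{j}$, hence the target $\bb$ is constant on each duplicate block and lies in the row span of $\Jb_{1}$, so the pseudo-inverse solution makes sense. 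To bound $\sigma_{\min}^{+}$, I would pass to a reduced $n'\leq n$ problem of distinct representatives on which the separation becomes pairwise; then adapt the classical NTK Gram-matrix lower bound (of the type developed in \citet{du2018gradient,zou2019gradient}) to show $\lambda_{\min}(\bar{\Hb}')\geq \Omega(\phi/n^{2})$ for the expected Gram matrix $\bar{\Hb}'$, together with a concentration argument showing $\|\Jb_{1}\Jb_{1}^{\top}/m - \bar{\Hb}\|\leq \Omega(\phi/n^{2})/2$ once $m\geq\tilde{\Omega}(L^{22}n^{12}\phi^{-4})$. The large exponents of $n$ and $L$ in the condition on $m$ come precisely from absorbing the deep-network propagation bounds (spectral norms of $\Db_{l}\Wb_{l+1}^{(0)}\cdots\Db_{L-1}\Wb_{L}^{(0)}$ and activation-pattern deviation bounds across depth) into this concentration step.

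Finally, combining the two ingredients gives $\|\Ub^{*}\|_{F}\leq R m^{-1/2}$ and $\langle\nabla f_{\Wb^{(0)}}(\xb_{i}),\Ub^{*}\rangle=y_{i}B-f_{\Wb^{(0)}}(\xb_{i})$, hence $y_{i}F_{\Wb^{(0)},\Wb^{(0)}+\Ub^{*}}(\xb_{i})=B$ for every $i$. Then $\epsilon_{\mathrm{NTRF}}\leq n^{-1}\sum_{i}\ell(B)\leq e^{-B}\leq \epsilon$, completing the argument. The key technical novelty relative to the cited papers is handling the projection onto $(\ker\Jb_{1})^{\perp}$ cleanly, which is what lets the result hold under the weaker class-dependent nondegeneration rather than full pairwise separation.
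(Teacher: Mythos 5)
Your approach is genuinely different from the paper's: you propose an explicit construction (minimum-norm solution of a linear system via pseudo-inverse on the first-layer Jacobian), while the paper gives an implicit construction by running gradient flow on $\Wb_{L-1}$ for the linear NTRF model with squared hinge loss, then bounding the total path length via a Polyak--{\L}ojasiewicz-type argument (Lemma B.5 of \citet{zou2019gradient}).

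However, your plan has a genuine gap at its central step. You need $\sigma_{\min}^{+}(\Jb_{1}\Jb_{1}^{\top})=\Omega(m\phi/n^{2})$, and you propose to handle the degeneracy that Assumption~\ref{assump:data_separation} allows by ``passing to distinct representatives.'' But Assumption~\ref{assump:data_separation} only guarantees separation \emph{between} classes; it permits same-class points to be at \emph{arbitrarily small but nonzero} distance. Such near-duplicates are not removed by merging exact duplicates, yet they drive $\sigma_{\min}^{+}(\Jb_{1}\Jb_{1}^{\top})$ toward zero: two nearly parallel rows of $\Jb_{1}$ contribute a nonzero eigenvalue that can be made arbitrarily small. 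The remedy you gesture at --- that $\bb$ is ``constant on each duplicate block'' --- only holds for exact duplicates, so it does not control $\bb^{\top}(\Jb_{1}\Jb_{1}^{\top})^{\dagger}\bb$ in the near-duplicate regime. Making the alignment of $\bb$ with the low-eigenvalue directions quantitative would be the real work, and your sketch does not attempt it.

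This is precisely the difficulty the paper's argument is designed to circumvent. The gradient lower bound of \citet{zou2019gradient} has the form $\|\sum_{i}\alpha_{i}\nabla_{\Wb_{L-1}}f_{\Wb^{(0)}}(\xb_{i})\|_{F}^{2}\ge\frac{Cm\phi}{n^{5}}(\sum_{i}|\alpha_{i}|)^{2}$ \emph{only} for coefficient vectors with the specific sign pattern $\alpha_{i}=y_{i}\tilde\ell'(\cdot)$ (so $\operatorname{sgn}(\alpha_{i})=-y_{i}$), which is exactly what gradient flow produces. Restricted to that orthant, same-class near-duplicates contribute gradient terms that add \emph{constructively} rather than cancel, so the bound survives without any minimum-eigenvalue assumption. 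Your pseudo-inverse bound, by contrast, needs control uniformly over all $\bm{\alpha}$ in the row space, which is a strictly stronger requirement and is false here. A secondary note: the paper perturbs $\Wb_{L-1}$, not $\Wb_{1}$, because \citet{zou2019gradient}'s lemma is stated and proved for that layer; and the $m=\tilde\Omega(L^{22}n^{12}\phi^{-4})$ condition in the proposition is inherited from plugging $R=\tilde\cO(n^{3/2}\phi^{-1/2})$ into the width requirement $m=\tilde\Omega(R^{8}L^{22})$ of Theorem~\ref{thm:convergence_gd}, not from a Gram-matrix concentration step as you suggest.
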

Proposition \ref{thm:data_separable_globalmin} suggests that under Assumption \ref{assump:data_separation}, in order to guarantee $\epsilon_{\text{NTRF}}\le \epsilon$, the size of  NTRF function class needs to be $\Omega(n^{3/2})$. Plugging this into Theorems \ref{thm:generalization_gd} and \ref{thm:online_optimization_generalization} leads to vacuous bounds on the test error. This makes sense since Assumption \ref{assump:data_separation} basically covers the ``random label'' setting, which is impossible to be learned with small generalization error. Moreover, we would like to point out our theoretical analysis leads to a sharper over-parameterization condition than that proved in \citet{zou2019gradient}, i.e., $m=\tilde \Omega\big(n^{14}L^{16}\phi^{-4}+n^{12}L^{16}\phi^{-4}\epsilon^{-1}\big)$, if the network depth satisfies $L\le \tilde \cO(n^{1/3}\vee \epsilon^{-1/6})$.

\section{Proof sketch of the main theory}\label{section:proof_maintheory}
In this section, we introduce a key technical lemma in Section~\ref{section:keylemma}, based on which we provide a proof sketch of Theorems~\ref{thm:convergence_gd}. The full proof of all our results can be found in the appendix. 

\subsection{A key technical lemma}\label{section:keylemma}
Here we introduce a key technical lemma used in the proof of Theorem~\ref{thm:convergence_gd}. 

Our proof is based on the key observation that near initialization, the neural network function can be approximated by its first-order Taylor expansion. In the following, we first give the definition of the linear approximation error in a $\tau$-neighborhood around initialization. 
\begin{align*}
    \epsilon_{\mathrm{app}}(\tau) := \sup_{i = 1,\ldots, n}  \sup_{ \Wb',\Wb \in \cB(\Wb^{(0)}, \tau) } \big|f_{\Wb'}(\xb_{i}) - f_{\Wb}(\xb_{i}) - \langle \nabla f_{\Wb}(\xb_{i}), \Wb' - \Wb \rangle \big|.
\end{align*}
If all the iterates of GD stay inside a neighborhood around initialization with small linear approximation error, then we may expect that the training of neural networks should be similar to the training of the corresponding linear model, where standard optimization techniques can be applied. Motivated by this, we also give the following definition on the gradient upper bound of neural networks around initialization, which is related to the Lipschitz constant of the optimization objective function.
\begin{align*}
    M(\tau) := \sup_{i = 1,\ldots, n} \sup_{l = 1,\ldots, L} \sup_{ \Wb \in \cB(\Wb^{(0)}, \tau) } \| \nabla_{\Wb_l} f_{\Wb}(\xb_i) \|_F.
\end{align*}
By definition, we can choose $\Wb^{*}\in \cB(\Wb^{(0)},Rm^{-1/2})$ such that 
$ n^{-1} \sum_{i=1}^{n}\ell\big(y_{i}F_{\Wb^{(0)},\Wb^*}(\xb_i) \big) = \epsilon_{\text{NTRF}}$. 
Then we have the following lemma.
\begin{lemma}\label{lemma:main_lemma}
Set $\eta = \mathcal{O}(L^{-1}M(\tau)^{-2})$. Suppose that $\Wb^*\in\cB(\Wb^{(0)},\tau)$ and $\Wb^{(t)} \in \cB(\Wb^{(0)},\tau)$ for all $0 \le t\le t'-1$. Then it holds that
\begin{align*}
\frac{1}{t'}\sum_{t=0}^{t'-1} L_{\cS}(\Wb^{(t)})\le \frac{\|\Wb^{(0)} -\Wb^{*} \|_{F}^2 - \|\Wb^{(t')} -\Wb^{{*}} \|_{F}^2 + 2t'\eta\epsilon_{\text{NTRF}}}{t'\eta\big(\frac{3}{2}-4\epsilon_{\mathrm{app}}(\tau)\big)}.
\end{align*}
\end{lemma}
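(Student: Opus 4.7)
This lemma is a standard descent-plus-convexity argument adapted to the almost-linear regime of over-parameterized networks, with all of the non-convexity absorbed into the approximation error $\epsilon_{\mathrm{app}}(\tau)$. The plan is to expand the Euclidean potential $\|\Wb^{(t)} - \Wb^{*}\|_F^2$, lower-bound the inner product $\langle \nabla L_{\cS}(\Wb^{(t)}), \Wb^{(t)} - \Wb^{*}\rangle$ using convexity of the cross-entropy loss together with the linear approximation bound, and upper-bound $\|\nabla L_{\cS}(\Wb^{(t)})\|_F^2$ by a multiple of $L_{\cS}(\Wb^{(t)})$ through the self-bounding property $|\ell'(z)|^2 \le |\ell'(z)| \le \ell(z)$ of the logistic loss.

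First I would write the one-step identity induced by the gradient descent update,
\begin{equation*}
    \|\Wb^{(t+1)} - \Wb^{*}\|_F^2 = \|\Wb^{(t)} - \Wb^{*}\|_F^2 - 2\eta\,\langle \nabla L_{\cS}(\Wb^{(t)}), \Wb^{(t)} - \Wb^{*}\rangle + \eta^2 \|\nabla L_{\cS}(\Wb^{(t)})\|_F^2.
\end{equation*}
Applying convexity of $\ell$ pointwise at $y_i f_{\Wb^{(t)}}(\xb_i)$ against the reference $y_i F_{\Wb^{(0)},\Wb^{*}}(\xb_i)$, then averaging over $i$, yields
\begin{equation*}
    L_{\cS}(\Wb^{(t)}) - \epsilon_{\text{NTRF}} \le \frac{1}{n}\sum_{i=1}^n \ell'\big(y_i f_{\Wb^{(t)}}(\xb_i)\big)\,y_i \big[f_{\Wb^{(t)}}(\xb_i) - F_{\Wb^{(0)},\Wb^{*}}(\xb_i)\big].
\end{equation*}
Since both $\Wb^{(t)}$ and $\Wb^{*}$ lie in $\cB(\Wb^{(0)},\tau)$, I insert $f_{\Wb^{*}}(\xb_i)$ as an intermediary and apply the definition of $\epsilon_{\mathrm{app}}(\tau)$ twice (once around $\Wb^{(t)}$ and once around $\Wb^{(0)}$) to obtain
\begin{equation*}
    \big| f_{\Wb^{(t)}}(\xb_i) - F_{\Wb^{(0)},\Wb^{*}}(\xb_i) - \langle \nabla f_{\Wb^{(t)}}(\xb_i), \Wb^{(t)} - \Wb^{*}\rangle\big| \le 2\epsilon_{\mathrm{app}}(\tau).
\end{equation*}
Plugging this back into the convexity inequality, using $\nabla L_{\cS}(\Wb^{(t)}) = \tfrac{1}{n}\sum_i \ell'_i y_i \nabla f_{\Wb^{(t)}}(\xb_i)$ and $|\ell'(z)| \le \ell(z)$ to control the remainder, produces
\begin{equation*}
    \langle \nabla L_{\cS}(\Wb^{(t)}), \Wb^{(t)} - \Wb^{*}\rangle \ge \big(1 - 2\epsilon_{\mathrm{app}}(\tau)\big)\, L_{\cS}(\Wb^{(t)}) - \epsilon_{\text{NTRF}}.
\end{equation*}

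For the squared-gradient term I combine $|\ell'|^2 \le \ell$ with $\|\nabla_{\Wb_l} f_{\Wb^{(t)}}(\xb_i)\|_F \le M(\tau)$ and Cauchy--Schwarz (summed across the $L$ layers) to deduce $\|\nabla L_{\cS}(\Wb^{(t)})\|_F^2 \le L\cdot M(\tau)^2 \cdot L_{\cS}(\Wb^{(t)})$. Picking the step size so that $\eta L M(\tau)^2 \le 1/2$ then forces $\eta^2 \|\nabla L_{\cS}(\Wb^{(t)})\|_F^2 \le \tfrac{\eta}{2} L_{\cS}(\Wb^{(t)})$. Substituting both bounds into the one-step identity and collecting coefficients gives
\begin{equation*}
    \|\Wb^{(t+1)} - \Wb^{*}\|_F^2 - \|\Wb^{(t)} - \Wb^{*}\|_F^2 \le -\eta\!\left(\tfrac{3}{2} - 4\epsilon_{\mathrm{app}}(\tau)\right) L_{\cS}(\Wb^{(t)}) + 2\eta\,\epsilon_{\text{NTRF}}.
\end{equation*}
Telescoping over $t = 0, \ldots, t'-1$ and dividing by $t'\eta(\tfrac{3}{2}-4\epsilon_{\mathrm{app}}(\tau))$ produces exactly the claimed bound.

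The main delicate step is the two-sided use of the linear approximation error: the comparator $F_{\Wb^{(0)},\Wb^{*}}$ is the first-order Taylor expansion at $\Wb^{(0)}$, whereas the gradient that appears naturally after expanding the squared distance is evaluated at the iterate $\Wb^{(t)}$. Bridging them through the true network output $f_{\Wb^{*}}$ costs $2\epsilon_{\mathrm{app}}(\tau)$ rather than a single $\epsilon_{\mathrm{app}}(\tau)$, and this factor of two is precisely what propagates into the $4\epsilon_{\mathrm{app}}(\tau)$ appearing in the denominator. The remaining work is book-keeping based on the logistic-loss self-bounding inequalities and the calibration of $\eta$ against $L\,M(\tau)^2$.
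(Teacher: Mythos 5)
Your proposal is correct and follows essentially the same route as the paper: expand the one-step descent identity for $\|\Wb^{(t)}-\Wb^*\|_F^2$, lower-bound $\langle\nabla L_S(\Wb^{(t)}),\Wb^{(t)}-\Wb^*\rangle$ by $(1-2\epsilon_{\mathrm{app}}(\tau))L_S(\Wb^{(t)})-\epsilon_{\text{NTRF}}$ via convexity plus a two-sided application of the linear-approximation bound through $f_{\Wb^*}$ as intermediary, control $\|\nabla L_S(\Wb^{(t)})\|_F^2\le \cO(LM(\tau)^2)L_S(\Wb^{(t)})$ using self-bounding of the logistic loss, and telescope. The only cosmetic differences are the order in which you invoke convexity versus $\epsilon_{\mathrm{app}}$ (the paper approximates first, then applies convexity, reaching the same per-example inequality) and the packaging of the self-bounding step ($|\ell'|^2\le\ell$ versus the paper's $|\ell'|\le 1$ combined with $|\ell'|\le\ell$), which are equivalent.
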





Lemma~\ref{lemma:main_lemma}  plays a central role in our proof. In specific, if $\Wb^{(t)}\in\cB(\Wb^{(0)},\tau)$ for all $t\le t'$, then Lemma~\ref{lemma:main_lemma} implies that the average training loss is in the same order of $\epsilon_{\text{NTRF}}$ as long as the linear approximation error  $\epsilon_{\mathrm{app}}(\tau)$ is bounded by a positive constant. This is in contrast to the proof in \citet{cao2019generalizationsgd}, where $\epsilon_{\mathrm{app}}(\tau)$ appears as an additive term in the 
upper bound of the training loss, thus requiring $\epsilon_{\mathrm{app}}(\tau)= \cO(\epsilon_{\text{NTRF}})$ to achieve the same error bound as in Lemma \ref{lemma:main_lemma}. Since we can show that $\epsilon_{\mathrm{app}} = \tilde \cO(m^{-1/6})$ (See Section \ref{sec:proof_theorem_GD_opt}), this suggests that $m=\tilde\Omega(1)$ is sufficient to make the average training loss in the same order of $\epsilon_{\text{NTRF}}$. 

Compared with the recent results for two-layer networks by \cite{ji2019polylogarithmic}
    , Lemma~\ref{lemma:main_lemma} is proved with different techniques. 
    In specific, the proof by \cite{ji2019polylogarithmic} relies on the 1-homogeneous property of the ReLU activation function, which limits their analysis to two-layer networks with fixed second layer weights. In comparison, our proof does not rely on homogeneity, and is purely based on the linear approximation property of neural networks and some specific properties of the loss function. Therefore, our proof technique can handle deep networks, and is potentially applicable to non-ReLU activation functions and other network architectures (e.g, Convolutional neural networks and Residual networks). 

 \subsection{Proof sketch of Theorem \ref{thm:convergence_gd}}
Here we provide a proof sketch of Theorem \ref{thm:convergence_gd}. The proof consists of two steps: (i) showing that all $T$ iterates stay close to initialization, and (ii) bounding the empirical loss achieved by gradient descent. Both of these steps are proved based on Lemma \ref{lemma:main_lemma}.

\begin{proof}[Proof sketch of Theorem \ref{thm:convergence_gd}]
Recall that we choose $\Wb^{*}\in \cB(\Wb^{(0)},Rm^{-1/2})$ such that 
$ n^{-1} \sum_{i=1}^{n}\ell\big(y_{i}F_{\Wb^{(0)},\Wb^*}(\xb_i) \big) = \epsilon_{\text{NTRF}}$. 
We set $\tau = \tilde\cO(L^{1/2}m^{-1/2}R)$, which is chosen slightly larger than $m^{-1/2}R$ since Lemma \ref{lemma:main_lemma} requires the region $\cB(\Wb^{(0)}, \tau)$ to include both $\Wb^*$ and $\{\Wb^{(t)}\}_{t=0,\dots,t'}$. Then by Lemmas 4.1 and B.3 in \citet{cao2019generalizationsgd} we know that $\epsilon_{\text{app}}(\tau) = \tilde\cO( \tau^{4/3} m^{1/2}L^3) = \tilde\cO(R^{4/3}L^{11/3}m^{-1/6})$. Therefore, we can set $m = \tilde\Omega(R^{8}L^{22})$ to ensure that $\epsilon_{\text{app}}(\tau)\le 1/8$.

Then we proceed to show that all iterates stay inside the region $\cB(\Wb^{(0)}, \tau)$.  Since the L.H.S. of Lemma \ref{lemma:main_lemma} is strictly positive and $\epsilon_{\text{app}}(\tau)\le 1/8$, we have for all $t\le T$,
\begin{align*}
\|\Wb^{(0)} -\Wb^{*} \|_{F}^2 -\|\Wb^{(t)} -\Wb^{{*}} \|_{F}^2 \ge - 2t\eta\epsilon_{\text{NTRF}},
\end{align*}
which gives an upper bound of $\|\Wb^{(t)}-\Wb^{*}\|_F$. Then by the choice of $\eta$, $T$, triangle inequality, and a simple induction argument, we see that $\|\Wb^{(t)} -\Wb^{(0)} \|_{F}\le m^{-1/2}R + \sqrt{2T\eta \epsilon_{\text{NTRF}}}= \tilde \cO(L^{1/2}m^{-1/2}R)$, which verifies that $\Wb^{(t)}\in \cB(\Wb^{(0)},\tau)$ for $t=0,\ldots, T-1$. 


The second step is to show that GD can find a neural network with at most $3\epsilon_{\text{NTRF}}$ training loss within $T$ iterations. To show this, by the bound given in Lemma \ref{lemma:main_lemma} with $\epsilon_{\mathrm{app}} \leq 1/8$, we drop the terms $\|\Wb^{(t)} - \Wb^{*}\|_{F}^2$ and rearrange the inequality to obtain
\begin{align*}
\frac{1}{T}\sum_{t=0}^{T-1}L_{S}(\Wb^{(t)})&\leq \frac{1}{\eta T} \|\Wb^{(0)} -\Wb^{*} \|_{F}^2  + 2\epsilon_{\text{NTRF}}.
\end{align*}
We see that $T$ is large enough to ensure that the first term in the bound above is smaller than $\epsilon_{\text{NTRF}}$. This implies that the best iterate among $\Wb^{(0)},\ldots, \Wb^{(T-1)}$ achieves an empirical loss at most $3\epsilon_{\text{NTRF}}$.
\end{proof}

\section{Conclusion}
In this paper, we established the global convergence and generalization error bounds of GD and SGD for training deep ReLU networks for the binary classification problem. We show that a network width condition that is polylogarithmic in the sample size $n$ and the inverse of target error $\epsilon^{-1}$ is sufficient to guarantee the learning of deep ReLU networks. Our results resolve an open question raised in \citet{ji2019polylogarithmic}. 

\subsubsection*{Acknowledgement}
We would like to thank the anonymous reviewers for their helpful comments.
ZC, YC and QG are  partially supported by the National Science Foundation CAREER Award 1906169,  IIS-2008981 and Salesforce Deep Learning Research Award. DZ is supported by the Bloomberg Data Science Ph.D. Fellowship. The views and conclusions contained in this paper are those of the authors and should not be
interpreted as representing any funding agencies.






\bibliographystyle{ims}
\bibliography{ReLU}

\newpage

\appendix

\section{Proof of Main Theorems}
In this section we provide the full proof of Theorems~\ref{thm:convergence_gd}, \ref{thm:generalization_gd} and \ref{thm:online_optimization_generalization}.

\subsection{Proof of Theorem~\ref{thm:convergence_gd}}\label{sec:proof_theorem_GD_opt}
We first provide the following lemma which is useful in the subsequent proof.
\begin{lemma}[Lemmas 4.1 and B.3 in \citet{cao2019generalizationsgd}]\label{lemma:choice} There exists an absolute constant $\kappa$ such that, with probability at least $1 - \mathcal{O}(nL^2) \exp[-\Omega(m\tau^{2/3}L)] $, for any $\tau\le \kappa L^{-6}[\log(m)]^{-3/2}$, it holds that
\begin{align*}
\epsilon_{\mathrm{app}}(\tau) \leq \tilde \cO\big(\tau^{4/3}L^3m^{1/2}\big), \quad M(\tau)\leq \tilde \cO(\sqrt{m}).
\end{align*}
\end{lemma}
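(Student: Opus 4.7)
The plan is to prove both bounds in parallel, each built on a base of concentration estimates at the Gaussian initialization plus perturbation stability inside the ball $\cB(\Wb^{(0)},\tau)$. Let $\xb_i^{(l)} = \sigma(\Wb_l^{(0)}\xb_i^{(l-1)})$ be the forward activations at initialization and $\Db_i^{(l)}$ the diagonal indicator matrix of active ReLU units at layer $l$, with tilded versions $\widetilde\xb_i^{(l)},\widetilde\Db_i^{(l)}$ for the corresponding quantities at a perturbed weight $\Wb\in\cB(\Wb^{(0)},\tau)$. I would first establish standard NTK-regime initialization estimates, each with failure probability $nL^2\exp(-\Omega(m/\mathrm{polylog}))$ via Gaussian concentration, $\varepsilon$-nets on the unit sphere, and a union bound: (i) $\|\xb_i^{(l)}\|_2 = \Theta(1)$ for every $i,l$; (ii) $\|\Wb_l^{(0)}\|_2 = \tilde \cO(1)$; and (iii) partial backward products $\|\Wb_L^{(0)}\Db_i^{(L-1)}\cdots \Db_i^{(l+1)}\Wb_{l+1}^{(0)}\|_2 = \tilde \cO(1)$ uniformly in $i,l$. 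These estimates, combined with the explicit $m^{1/2}$ prefactor in the definition of $f_\Wb$, are what produce the $\sqrt m$ on the right-hand sides of both parts of the lemma.

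For $M(\tau)$, I would then write $\nabla_{\Wb_l}f_\Wb(\xb_i) = m^{1/2}\bigl(\Wb_L\widetilde\Db_i^{(L-1)}\cdots\widetilde\Db_i^{(l)}\bigr)^\top \bigl(\widetilde\xb_i^{(l-1)}\bigr)^\top$ (with an obvious adjustment at $l=L$) and argue that moving from $\Wb^{(0)}$ to $\Wb$ changes each factor by at most a multiplicative $1+o(1)$. The activation perturbation is $O(\tau)$ per layer in $\ell_2$, so compounds to $O(\tau L)$; the backward product is stable to perturbations of that size, using (ii) and (iii), provided the restriction $\tau\le \kappa L^{-6}[\log m]^{-3/2}$ is enforced so that the $(1+\tau)^L$ depth compounding remains under control. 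Combined with (i)--(iii), this gives $\|\nabla_{\Wb_l}f_\Wb(\xb_i)\|_F \le \tilde \cO(\sqrt m)$, and a union bound over $i$ and $l$ closes this part.

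The main obstacle is the $\tau^{4/3}$ bound on the linearization error. Telescoping $f_{\Wb'}(\xb_i) - f_\Wb(\xb_i) - \langle \nabla f_\Wb(\xb_i), \Wb'-\Wb\rangle$ across layers, each summand is supported on the set $S_l$ of neurons whose activation pattern at layer $l$ differs between $\Wb$ and $\Wb'$. A crude Lipschitz bound would estimate $|S_l|$ by the number of neurons whose preactivation at $\Wb$ lies within the perturbation-induced displacement $O(\tau)$ of zero; Gaussian anti-concentration only gives $\tilde \cO(m\tau)$ for this count, producing an $\tilde \cO(\tau^2 m)$ error that is too coarse. The sharper $\tilde \cO(m\tau^{2/3})$ flip-count separates neurons by the magnitude of their initial preactivation: those with preactivation $\lesssim t$ are counted by anti-concentration as $\tilde \cO(mt)$, while those with larger preactivation contribute only through $\ell_2$ bulk bounds that scale as $O(\tau)$ per neuron; balancing $t$ against $\tau$ yields the optimal threshold and the $\tau^{2/3}$ exponent, and Cauchy--Schwarz against the per-neuron magnitude $O(\tau)$ then gives $\tilde \cO(\tau^{4/3}m^{1/2})$ per layer. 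The $L^3$ factor arises from composing $L$ such layer-wise bounds through the forward and backward passes, and the restriction on $\tau$ is exactly the budget needed to prevent depth-$L$ compounding from swallowing the concentration. A final union bound over $i\in[n]$, plus an $\varepsilon$-net argument over $\Wb,\Wb'\in\cB(\Wb^{(0)},\tau)$, completes the proof.
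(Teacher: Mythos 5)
The paper does not prove this lemma; it is stated verbatim as an import of Lemmas 4.1 and B.3 of \citet{cao2019generalizationsgd}, so there is no ``paper's own proof'' to compare against. Your reconstruction follows the semi-smoothness argument used there (and in Allen-Zhu, Li, Song, 2019): concentration and $\varepsilon$-net estimates at initialization for forward activations, weight operator norms and partial backward products; per-layer stability of these quantities inside $\cB(\Wb^{(0)},\tau)$ with the restriction on $\tau$ controlling the depth compounding; and, for the linearization error, the threshold-balancing flip count $\tilde\cO(m\tau^{2/3})$ per layer combined via Cauchy--Schwarz against the $\ell_2$ perturbation budget to produce $\tilde\cO(\tau^{4/3}m^{1/2})$. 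One place you are loose is normalization: the anti-concentration count $\tilde\cO(mt)$ requires $t$ to be measured in units of the preactivation standard deviation $\Theta(m^{-1/2})$, and the ``per-neuron magnitude $O(\tau)$'' in your last step is really the total $\ell_2$ budget rather than a per-neuron bound --- the balance $\sqrt{m\tau^{2/3}}\cdot\tau = m^{1/2}\tau^{4/3}$ is correct, but the exponents only cancel if these normalizations are tracked carefully. That bookkeeping, together with the precise accounting of where each power of $L$ enters, is where the bulk of the cited proof resides.
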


\begin{proof}[Proof of Theorem \ref{thm:convergence_gd}]
Recall that $\Wb^*$ is chosen such that
\begin{align*}
\frac{1}{n} \sum_{i=1}^{n}\ell\big(y_{i}F_{\Wb^{(0)},\Wb^*}(\xb_i) \big) = \epsilon_{\text{NTRF}}
\end{align*}
and  $\Wb^{*}\in \cB(\Wb^{(0)},Rm^{-1/2})$. 
Note that to apply Lemma \ref{lemma:main_lemma}, we need the region $\cB(\Wb^{(0)}, \tau)$ to include both $\Wb^*$ and $\{\Wb^{(t)}\}_{t=0,\dots,t'}$. This motivates us to set $\tau = \tilde\cO(L^{1/2}m^{-1/2}R)$, which is slightly larger than $m^{-1/2}R$. With this choice of $\tau$, 
by Lemma \ref{lemma:choice} we have $\epsilon_{\text{app}}(\tau) = \tilde\cO( \tau^{4/3} m^{1/2}L^3) = \tilde\cO(R^{4/3}L^{11/3}m^{-1/6})$. Therefore, we can set 
\begin{align}\label{eq:dependency_m*}
m = \tilde\Omega(R^{8}L^{22})
\end{align}
to ensure that $\epsilon_{\text{app}}(\tau)\le 1/8$, where $\tilde \Omega(\cdot)$
hides polylogarithmic dependencies on network depth $L$, NTRF function class size $R$, and failure probability parameter $\delta$.
Then by Lemma \ref{lemma:main_lemma}, we have with probability at least $1-\delta$, we have
\begin{align}
&\|\Wb^{(0)} -\Wb^{*} \|_{F}^2 - \|\Wb^{(t')} -\Wb^{{*}} \|_{F}^2 \ge \eta \sum_{t=0}^{t'-1}L_{S}(\Wb^{(t)}) - 2t'\eta \epsilon_{\text{NTRF}}\label{eq:keyequationGD}
\end{align}
as long as $\Wb^{(0)},\ldots,\Wb^{(t'-1)} \in \cB(\Wb^{(0)},\tau)$. 
In the following proof we choose $\eta = \Theta(L^{-1}m^{-1})$ and $T = \lceil LR^{2}m^{-1}\eta^{-1}\epsilon_{\text{NTRF}}^{-1}\rceil$. 

We prove the theorem by two steps: 1) we show that all iterates $\{\Wb^{(0)},\cdots,\Wb^{(T)}\}$ will stay inside the region $\cB(\Wb^{(0)},\tau)$; and 2) we show that GD can find a neural network with at most $3\epsilon_{\text{NTRF}}$ training loss within $T$ iterations.

\noindent\textbf{All iterates stay inside $\cB(\Wb^{(0)},\tau)$.} We prove this part by induction. Specifically, given $t'\le T$, we assume the hypothesis $\Wb^{(t)} \in \cB(\Wb^{(0)},\tau)$ holds for all $t< t'$ and prove that $\Wb^{(t')} \in \cB(\Wb^{(0)},\tau)$. First, it is clear that $\Wb^{(0)}\in\cB(\Wb^{(0)},\tau)$. Then by \eqref{eq:keyequationGD} and the fact that $L_S(\Wb)\ge 0$, we have 
\begin{align*}
\|\Wb^{(t')} -\Wb^{{*}} \|_{F}^2 \leq \|\Wb^{(0)} -\Wb^{*} \|_{F}^2 + 2\eta t'\epsilon_{\text{NTRF}} 
\end{align*}
Note that $T = \lceil LR^{2}m^{-1}\eta^{-1}\epsilon_{\text{NTRF}}^{-1}\rceil$ and  $\Wb^*\in \cB(\Wb^{(0)},R\cdot m^{-1/2})$, we have 
\begin{align*}
\sum_{l=1}^{L}\|\Wb_l^{(t')} -\Wb_l^{{*}} \|_{F}^2 = \|\Wb^{(t')} -\Wb^{{*}} \|_{F}^2 \leq  CLR^2m^{-1},
\end{align*}
where $C\geq 4$ is an absolute constant.
Therefore, by triangle inequality, we further have the following for all $l\in[L]$,
\begin{align}\label{eq:travel_distance}
\|\Wb_l^{(t')} -\Wb_l^{(0)} \|_{F} &\leq \|\Wb_l^{(t')} -\Wb_l^{*} \|_{F} + \|\Wb_l^{(0)} -\Wb_l^{*} \|_{F}\notag\\
&\leq \sqrt{CL}Rm^{-1/2} + Rm^{-1/2}\notag\\
&\leq 2\sqrt{CL}Rm^{-1/2}.
\end{align}
Therefore, it is clear that $\|\Wb_l^{(t')} - \Wb_l^{(0)}\|_F\le 2\sqrt{CL}Rm^{-1/2}\le \tau$ based on our choice of $\tau$ previously. 
This completes the proof of the first part.

\noindent\textbf{Convergence of gradient descent.} \eqref{eq:keyequationGD} implies
\begin{align*}
\|\Wb^{(0)} -\Wb^{*} \|_{F}^2  - \|\Wb^{(T)} -\Wb^{{*}} \|_{F}^2 \geq \eta \bigg(\sum_{t=0}^{T-1}L_{S}(\Wb^{(t)}) - 2 T\epsilon_{\text{NTRF}} \bigg).
\end{align*}
Dividing by $\eta T$ on the both sides, we get
\begin{align*}
\frac{1}{T}\sum_{t=0}^{T-1}L_{S}(\Wb^{(t)})&\leq \frac{\|\Wb^{(0)} -\Wb^{*} \|_{F}^2 }{\eta T} + 2\epsilon_{\text{NTRF}}\leq \frac{LR^2 m^{-1}}{\eta T} + 2\epsilon_{\text{NTRF}}\leq 3\epsilon_{\text{NTRF}},
\end{align*}
where the second inequality is by the fact that $\Wb^*\in \cB(\Wb^{(0)},R\cdot m^{-1/2})$ and the last inequality is by our choices of $T$ and $\eta$ which ensure that $T\eta \ge  LR^{2}m^{-1}\epsilon_{\text{NTRF}}^{-1}$. Notice that $T = \lceil LR^{2}m^{-1}\eta^{-1}\epsilon_{\text{NTRF}}^{-1}\rceil=\cO(L^2R^2\epsilon_{\text{NTRF}}^{-1})$. This completes the proof of the second part, and we are able to complete the proof.
\end{proof}

\subsection{Proof of Theorem \ref{thm:generalization_gd}}
Following \citet{cao2019generalization}, we first introduce the definition of surrogate loss of the network, which is defined by the derivative of the loss function.
\begin{definition}
We define the empirical surrogate error $\cE_S(\Wb)$ and population surrogate error $\cE_\cD(\Wb)$ as follows:
\begin{align*}
    \cE_S(\Wb) := -\frac{1}{n} \sum_{i=1}^n \ell'\big[y_i\cdot f_{\Wb}(\bx_i)\big],~~
    \cE_\cD(\Wb) := \EE_{(\xb,y) \sim \cD} \big\{ - \ell'\big[y\cdot   f_{\Wb}(\xb)\big] \big\}.
\end{align*}
\end{definition}
The following lemma gives uniform-convergence type of results for $\cE_S(\Wb)$ utilizing the fact that $-\ell'(\cdot)$ is bounded and Lipschitz continuous.  
\begin{lemma} \label{lemma:generalization_rademacher}
For any $\tilde{R},\delta >0$, suppose that $m =\tilde\Omega(L^{12} \tilde R^{2}) \cdot [\log(1/\delta)]^{3/2}$. Then with probability at least $1 - \delta$, it holds that
\begin{align*}
    | \cE_{\cD} (\Wb) - \cE_S(\Wb) | \leq \tilde\cO \Bigg( \min\Bigg\{ 4^L L^{3/2} \tilde R \sqrt{\frac{m}{n}}, \frac{ L\tilde{R} }{\sqrt{n}} + \frac{ L^{3} \tilde{R}^{4/3} }{m^{1/6}} \Bigg\} \Bigg) + \cO\Bigg(\sqrt{\frac{\log(1/\delta)}{n}} \Bigg)
\end{align*}
for all $\Wb \in \cB(\Wb^{(0)}, \tilde{R} \cdot m^{-1/2 } )$
\end{lemma}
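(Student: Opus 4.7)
The plan is to apply standard Rademacher complexity arguments to the surrogate loss class $\mathcal{G}=\{(\xb,y)\mapsto -\ell'[y\cdot f_\Wb(\xb)]:\Wb\in\cB(\Wb^{(0)},\tilde R m^{-1/2})\}$. Since $-\ell'(z)=1/(1+e^z)$ takes values in $(0,1)$ and is $1/4$-Lipschitz (because $|\ell''|\le 1/4$), McDiarmid's inequality together with symmetrization and the Ledoux--Talagrand contraction principle yield, with probability at least $1-\delta$,
\begin{align*}
\sup_{\Wb \in \cB(\Wb^{(0)}, \tilde R m^{-1/2})}\big|\cE_\cD(\Wb)-\cE_S(\Wb)\big| \le 2\,\widehat{\mathcal{R}}_n(\cF_{\mathrm{net}})+\cO\Big(\sqrt{\log(1/\delta)/n}\Big),
\end{align*}
where $\widehat{\mathcal{R}}_n$ is the empirical Rademacher complexity of $\cF_{\mathrm{net}}=\{\xb\mapsto f_\Wb(\xb):\Wb\in\cB(\Wb^{(0)},\tilde R m^{-1/2})\}$ evaluated on $\{\xb_i\}_{i=1}^n$ (the labels $y_i\in\{\pm 1\}$ are absorbed by the sign symmetry of Rademacher variables). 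It then suffices to bound $\widehat{\mathcal{R}}_n(\cF_{\mathrm{net}})$ by each of the two quantities inside the $\min$ separately.

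For the kernel-free bound, I would use the standard layerwise perturbation estimate for deep ReLU networks: uniformly over $\Wb\in\cB(\Wb^{(0)},\tilde R m^{-1/2})$, each $\|\Wb_\ell\|_2$ stays bounded by an absolute constant at initialization with high probability, and perturbations in one layer propagate multiplicatively through the remaining layers. Combined with a Dudley-integral covering argument this yields $\widehat{\mathcal{R}}_n(\cF_{\mathrm{net}})\le \tilde\cO(4^L L^{3/2}\tilde R\sqrt{m/n})$, with the $4^L$ factor arising from propagating perturbations through $L$ ReLU layers and the extra $\sqrt m$ from the $m^{1/2}$ output normalization in the network definition.

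For the sharper linearization-based bound, I would decompose $f_\Wb=F_{\Wb^{(0)},\Wb}+r_\Wb$ and treat the two parts separately. The width condition $m=\tilde\Omega(L^{12}\tilde R^{2})[\log(1/\delta)]^{3/2}$ is exactly what is needed to invoke Lemma \ref{lemma:choice} with $\tau=\tilde R m^{-1/2}$, which yields both the uniform residual bound $|r_\Wb(\xb_i)|\le \epsilon_{\mathrm{app}}(\tau)=\tilde\cO(\tilde R^{4/3}L^{3} m^{-1/6})$ and the gradient bound $\|\nabla_{\Wb_\ell} f_{\Wb^{(0)}}(\xb_i)\|_F\le M(\tau)=\tilde\cO(\sqrt m)$. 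The residual contributes at most $\tilde\cO(\tilde R^{4/3}L^{3} m^{-1/6})$ to $\widehat{\mathcal{R}}_n$. The linear part $F_{\Wb^{(0)},\Wb}(\xb_i)=f_{\Wb^{(0)}}(\xb_i)+\langle \nabla f_{\Wb^{(0)}}(\xb_i),\Wb-\Wb^{(0)}\rangle$ is a bounded-norm linear class in the fixed random feature map $\xb\mapsto\{\nabla_{\Wb_\ell}f_{\Wb^{(0)}}(\xb)\}_\ell$, and the standard linear-class Rademacher bound together with Jensen's inequality gives
\begin{align*}
\widehat{\mathcal{R}}_n\big(\{F_{\Wb^{(0)},\Wb}\}\big)\le \frac{\sqrt L\cdot \tilde R m^{-1/2}\cdot \sqrt{n\cdot L\cdot \tilde\cO(m)}}{n}=\tilde\cO\Big(\frac{L\tilde R}{\sqrt n}\Big),
\end{align*}
where $\sqrt L\cdot \tilde R m^{-1/2}$ is the collected Frobenius radius across the $L$ layers of $\cB(\Wb^{(0)},\tilde R m^{-1/2})$. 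Combining the two contributions and taking the $\min$ of the two bounds gives the claim.

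The main obstacle I anticipate is carrying out the Dudley-integral/covering calculation for the kernel-free bound so that the depth dependence is exactly $4^L$ (rather than a much worse factor arising from naively counting the roughly $Lm^{2}$ trainable parameters) and the $m$ dependence is exactly $\sqrt m$. Once Lemma \ref{lemma:choice} is in hand, the linearization-based bound is largely routine, and the final statement follows from assembling the two Rademacher bounds under the $\min$.
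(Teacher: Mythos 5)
Your proposal follows essentially the same two-track route as the paper: uniform convergence via symmetrization plus a high-probability deviation term, then two separate Rademacher complexity bounds whose minimum is taken. The paper also obtains the $4^L L^{3/2}\tilde R\sqrt{m/n}$ bound from spectral-norm control of the weight matrices, but rather than carrying out the Dudley-integral/covering calculation from scratch (which you correctly identify as the delicate part), it simply adapts the spectrally-normalized complexity bound of \citet{bartlett2017spectrally} (their Theorem 3.3 and Lemma A.5), replacing their ramp loss by $-\ell'(\cdot)$, which is also bounded and Lipschitz. Since that result is itself proved by the covering/Dudley machinery you describe, this is not a genuinely different argument, just a shortcut via citation. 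Your second bound — decomposing $f_\Wb = F_{\Wb^{(0)},\Wb} + r_\Wb$, using Lemma~\ref{lemma:choice} for the residual and gradient bounds, and applying Cauchy–Schwarz with Jensen for the linear part — matches the paper's derivation essentially line for line, and your arithmetic for $I_2$ (collected across layers rather than summed per layer) gives the same $\tilde\cO(L\tilde R/\sqrt{n})$. One cosmetic difference: you apply the Lipschitz contraction to pass from $\cG$ to $\cF_{\mathrm{net}}$ before splitting into the two bounds, whereas the paper only invokes that contraction in the linearization-based bound and applies the \citet{bartlett2017spectrally} result directly on the loss class $\cG$; this ordering is immaterial.
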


We are now ready to prove Theorem~\ref{thm:generalization_gd}, which combines the trajectory distance analysis in the proof of Theorem \ref{thm:convergence_gd} with Lemma~\ref{lemma:generalization_rademacher}.

\begin{proof}[Proof of Theorem~\ref{thm:generalization_gd}]
With exactly the same proof as Theorem~\ref{thm:convergence_gd}, by \eqref{eq:travel_distance} and induction we have $\Wb^{(0)},\Wb^{(1)},\ldots,\Wb^{(T)} \in \cB(\Wb^{(0)}, \tilde{R} m^{-1/2})$ with $\tilde{R} = \cO(\sqrt{L}R)$.
Therefore 
by Lemma~\ref{lemma:generalization_rademacher}, we have
\begin{align*}
    | \cE_{\cD} (\Wb^{(t)}) - \cE_S(\Wb^{(t)}) | \leq \tilde\cO \Bigg( \min\Bigg\{ 4^L L^{2} R \sqrt{\frac{m}{n}}, \frac{ L^{3/2}R }{\sqrt{n}} + \frac{ L^{11/3} R^{4/3} }{m^{1/6}} \Bigg\} \Bigg) + \cO\Bigg(\sqrt{\frac{\log(1/\delta)}{n}} \Bigg)
\end{align*}
for all $t=0,1,\ldots,T$. Note that we have $\ind\{z < 0 \} \leq -2\ell'(z)$. Therefore, 
\begin{align*}
    \EE L_\cD^{0-1}(\Wb^{(t)}) &\leq 2 \cE_{\cD} (\Wb^{(t)})\\
    &\leq  2L_S(\Wb^{(t)}) + \tilde\cO \Bigg( \min\Bigg\{ 4^L L^2 R \sqrt{\frac{m}{n}}, \frac{ L^{3/2}R }{\sqrt{n}} + \frac{ L^{11/3} R^{4/3} }{m^{1/6}} \Bigg\} \Bigg) + \cO\Bigg(\sqrt{\frac{\log(1/\delta)}{n}} \Bigg)
\end{align*}
for $t=0,1,\ldots,T$,
where the last inequality is by $\cE_S(\Wb) \leq L_S(\Wb)$ because $- \ell'(z) \leq \ell(z)$ for all $z \in R$.
This finishes the proof.
\end{proof}

 \subsection{Proof of Theorem \ref{thm:online_optimization_generalization}}
 In this section we provide the full proof of Theorem \ref{thm:online_optimization_generalization}. 
 We first give the following result, which is the counterpart of Lemma~\ref{lemma:main_lemma} for SGD. Again we pick $\Wb^{*}\in \cB(\Wb^{(0)},Rm^{-1/2})$ such that the loss of the corresponding NTRF model $F_{\Wb^{(0)},\Wb^*}(\xb)$ achieves $\epsilon_{\text{NTRF}}$.
 \begin{lemma}\label{lemma:main_lemma_online}
Set $\eta = \mathcal{O}(L^{-1}M(\tau)^{-2})$. Suppose that $\Wb^{*} \in \cB(\Wb^{(0)},\tau)$ and $\Wb^{(n')} \in \cB(\Wb^{(0)},\tau)$ for all $0 \le n'\le n-1$. Then  it holds that
\begin{align*}
&\|\Wb^{(0)} -\Wb^{*} \|_{F}^2 - \|\Wb^{(n')} -\Wb^{{*}} \|_{F}^2 \ge \Big(\frac{3}{2}-4\epsilon_{\mathrm{app}}(\tau)\Big)\eta \sum_{i=1}^{n'}L_{i}(\Wb^{(i-1)}) - 2n\eta \epsilon_{\text{NTRF}}.
\end{align*}

\end{lemma}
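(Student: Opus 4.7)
\textbf{Proof plan for Lemma \ref{lemma:main_lemma_online}.} The strategy is to adapt the argument behind Lemma \ref{lemma:main_lemma} to a per-sample update and then telescope, rather than working with the full empirical gradient. First I would perform the standard one-step distance expansion for the SGD update $\Wb^{(i)} = \Wb^{(i-1)} - \eta \nabla_{\Wb} L_i(\Wb^{(i-1)})$, writing
\begin{align*}
\|\Wb^{(i)}-\Wb^*\|_F^2 = \|\Wb^{(i-1)}-\Wb^*\|_F^2 - 2\eta\langle \nabla L_i(\Wb^{(i-1)}), \Wb^{(i-1)}-\Wb^*\rangle + \eta^2\|\nabla L_i(\Wb^{(i-1)})\|_F^2.
\end{align*}
The plan is to lower-bound the middle inner product by $2L_i(\Wb^{(i-1)}) - 2\ell(y_i F_{\Wb^{(0)},\Wb^*}(\xb_i))$ up to a linear-approximation error, and to upper-bound the last (quadratic) term by a multiple of $\eta L_i(\Wb^{(i-1)})$, then telescope from $i=1$ to $n'$.

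For the inner product I would use that $\nabla L_i(\Wb^{(i-1)}) = \ell'(y_i f_{\Wb^{(i-1)}}(\xb_i))\cdot y_i\nabla f_{\Wb^{(i-1)}}(\xb_i)$ combined with convexity of the logistic loss:
\begin{align*}
L_i(\Wb^{(i-1)})-\ell(y_i F_{\Wb^{(0)},\Wb^*}(\xb_i)) \le -\ell'(y_i f_{\Wb^{(i-1)}}(\xb_i))\cdot y_i\big[f_{\Wb^{(i-1)}}(\xb_i)-F_{\Wb^{(0)},\Wb^*}(\xb_i)\big].
\end{align*}
Because both $\Wb^{(i-1)}$ and $\Wb^*$ lie in $\cB(\Wb^{(0)},\tau)$, applying the definition of $\epsilon_{\mathrm{app}}(\tau)$ at the centers $\Wb^{(i-1)}$ and $\Wb^{(0)}$ lets me replace $f_{\Wb^{(i-1)}}(\xb_i)-F_{\Wb^{(0)},\Wb^*}(\xb_i)$ by $\langle \nabla f_{\Wb^{(i-1)}}(\xb_i),\Wb^{(i-1)}-\Wb^*\rangle$ at the cost of an additive $\mathcal{O}(\epsilon_{\mathrm{app}}(\tau))$ term. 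Multiplying by $|\ell'|\le 1$ and then invoking the self-bounding inequality $|\ell'(z)|\le C\,\ell(z)$ for the cross-entropy loss turns this additive error into a multiplicative $\epsilon_{\mathrm{app}}(\tau)L_i(\Wb^{(i-1)})$, which is exactly what produces the $-4\epsilon_{\mathrm{app}}(\tau)$ term in the statement.

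For the quadratic remainder I would bound $\|\nabla L_i(\Wb^{(i-1)})\|_F^2 \le L\,M(\tau)^2|\ell'(\cdot)|^2 \le C L\,M(\tau)^2 L_i(\Wb^{(i-1)})$, again using the self-bounding property. With $\eta = \mathcal{O}(L^{-1}M(\tau)^{-2})$ this contributes at most $\tfrac{1}{2}\eta L_i(\Wb^{(i-1)})$. Combining, the one-step inequality becomes
\begin{align*}
\|\Wb^{(i-1)}-\Wb^*\|_F^2 - \|\Wb^{(i)}-\Wb^*\|_F^2 \;\ge\; \Big(\tfrac{3}{2}-4\epsilon_{\mathrm{app}}(\tau)\Big)\eta L_i(\Wb^{(i-1)}) - 2\eta\,\ell(y_i F_{\Wb^{(0)},\Wb^*}(\xb_i)).
\end{align*}
Summing for $i=1,\dots,n'$ telescopes the left side. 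For the right side, since $\ell\ge 0$ and $n'\le n$, we can extend the sum to all $n$ samples and use $\sum_{i=1}^n \ell(y_i F_{\Wb^{(0)},\Wb^*}(\xb_i)) = n\,\epsilon_{\text{NTRF}}$ by the choice of $\Wb^*$, yielding the $-2n\eta\,\epsilon_{\text{NTRF}}$ term in the claim.

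\textbf{Main obstacle.} The subtle step is the inner-product bound: because $F_{\Wb^{(0)},\Wb^*}$ linearizes at $\Wb^{(0)}$ while the SGD gradient sits at $\Wb^{(i-1)}$, the substitution above produces cross terms of the form $\langle \nabla f_{\Wb^{(i-1)}}(\xb_i)-\nabla f_{\Wb^{(0)}}(\xb_i),\Wb^*-\Wb^{(0)}\rangle$ that must be reabsorbed into $\epsilon_{\mathrm{app}}(\tau)$ through a triangle-inequality argument — this is where the constants $\tfrac{3}{2}$ and $4$ originate and need to be tracked carefully so that the net coefficient on $L_i(\Wb^{(i-1)})$ remains strictly positive once the quadratic-remainder and linearization errors have been charged off.
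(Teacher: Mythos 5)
Your plan is essentially the same as the paper's proof: expand the per-sample SGD step, lower-bound the inner product via two applications of $\epsilon_{\mathrm{app}}(\tau)$ (one linearization at $\Wb^{(i-1)}$ toward $\Wb^*$, one at $\Wb^{(0)}$ toward $\Wb^*$) combined with convexity of the logistic loss, use $-\ell'(\cdot) \le \ell(\cdot)$ and $-\ell'(\cdot)\le 1$ to make the approximation error and the quadratic remainder both multiplicative in $L_i(\Wb^{(i-1)})$, absorb the quadratic term with the step-size choice, telescope, and extend the sum over $n'$ terms to $n$ terms using $\ell \ge 0$. The only slip is a sign in your displayed convexity inequality: with $a=y_i f_{\Wb^{(i-1)}}(\xb_i)$, $b = y_i F_{\Wb^{(0)},\Wb^*}(\xb_i)$, convexity gives $\ell(a)-\ell(b)\le \ell'(a)(a-b)$, not $-\ell'(a)(a-b)$; your subsequent reasoning shows you intended the correct version, so this is a typo rather than a gap.
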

We introduce a surrogate loss $\cE_i(\Wb) = -\ell'[y_i\cdot f_{\Wb}(\bx_i)]$ and its population version $\cE_{\cD}(\Wb) = \EE_{(\bx,y)\sim \cD} [-\ell'[y\cdot f_{\Wb}(\bx)]]$, which have been used in \citep{ji2018risk,cao2019generalizationsgd,ji2019polylogarithmic}. 
Our proof is based on the application of Lemma~\ref{lemma:main_lemma_online} and an online-to-batch conversion argument \citep{cesa2004generalization,cao2019generalizationsgd,ji2019polylogarithmic}. 
We introduce a surrogate loss $\cE_i(\Wb) = -\ell'[y_i\cdot f_{\Wb}(\bx_i) ]$ and its population version $\cE_{\cD}(\Wb) = \EE_{(\bx,y)\sim \cD}[-\ell'(y\cdot f_{\Wb}(\bx) )]$, which have been used in \citep{ji2018risk,cao2019generalizationsgd,nitanda2019refined,ji2019polylogarithmic}. 



\begin{proof}[Proof of Theorem \ref{thm:online_optimization_generalization}]
Recall that $\Wb^*$ is chosen such that
\begin{align*}
\frac{1}{n} \sum_{i=1}^{n}\ell\big(y_{i}F_{\Wb^{(0)},\Wb^*}(\xb_i) \big) = \epsilon_{\text{NTRF}}
\end{align*}
and  $\Wb^{*}\in \cB(\Wb^{(0)},Rm^{-1/2})$. 
To apply Lemma \ref{lemma:main_lemma_online}, we need the region $\cB(\Wb^{(0)}, \tau)$ to include both $\Wb^*$ and $\{\Wb^{(t)}\}_{t=0,\dots,t'}$. This motivates us to set $\tau = \tilde\cO(L^{1/2}m^{-1/2}R)$, which is slightly larger than $m^{-1/2}R$. With this choice of $\tau$, by Lemma \ref{lemma:choice} we have $\epsilon_{\text{app}}(\tau) = \tilde\cO( \tau^{4/3} m^{1/2}L^3) = \tilde\cO(R^{4/3}L^{11/3}m^{-1/6})$. Therefore, we can set 
\begin{align*}
m = \tilde\Omega(R^{8}L^{22})
\end{align*}
to ensure that $\epsilon_{\text{app}}(\tau)\le 1/8$, where $\tilde \Omega(\cdot)$
hides polylogarithmic dependencies on network depth $L$, NTRF function class size $R$, and failure probability parameter $\delta$.

Then by Lemma \ref{lemma:main_lemma_online}, 
we have with probability at least $1-\delta$,
\begin{align}
&\|\Wb^{(0)} -\Wb^{*} \|_{F}^2 - \|\Wb^{(n')} -\Wb^{{*}} \|_{F}^2  \ge \eta \sum_{i=1}^{n'}L_{i}(\Wb^{(i-1)}) -  2n\eta\epsilon_{\text{NTRF}}\label{eq:keyequationSGD}
\end{align}
as long as $\Wb^{(0)},\ldots,\Wb^{(n'-1)} \in \cB(\Wb^{(0)},\tau)$.

We then prove Theorem \ref{thm:online_optimization_generalization} in two steps: 1) all iterates stay inside $\cB(\Wb^{(0)}, \tau)$; and 2) convergence of online SGD.  

\noindent\textbf{All iterates stay inside $\cB(\Wb^{(0)}, \tau)$.} Similar to the proof of Theorem \ref{thm:convergence_gd}, we prove this part by induction. Assuming $\Wb^{(i)}$ satisfies $\Wb^{(i)}\in \cB(\Wb^{(0)},\tau)$ for all $i\le n'-1$, by \eqref{eq:keyequationSGD}, we have
\begin{align*}
\|\Wb^{(n')} -\Wb^{{*}} \|_{F}^2 &\leq \|\Wb^{(0)} - \Wb^{*}\|_{F}^2 + 2n\eta \epsilon_{\text{NTRF}}\\
&\leq LR^{2}\cdot m^{-1} + 2n\eta \epsilon_{\text{NTRF}}, 
\end{align*}
where the last inequality is by $\Wb^{*}\in\cB(\Wb^{(0)},R m^{-1/2})$.
Then by triangle inequality, we further get 
\begin{align*}
\|\Wb_l^{(n')} -\Wb_l^{(0)} \|_{F} &\leq \|\Wb_l^{(n')} -\Wb_l^{*} \|_{F}+\|\Wb_l^{*} -\Wb_l^{(0)} \|_{F}\\
&\le \|\Wb^{(n')} -\Wb^{*} \|_{F}+\|\Wb_l^{*} -\Wb_l^{(0)} \|_{F}\\
&\le \cO(\sqrt{L}Rm^{-1/2} + \sqrt{n\eta \epsilon_{\text{NTRF}}}).
\end{align*}

Then by our choices of $\eta = \Theta\big(m^{-1}\cdot(LR^{2}n^{-1}\epsilon_{\text{NTRF}}^{-1}\wedge L^{-1})\big)$, we have $\|\Wb^{(n')} - \Wb^{(0)}\|_F\le 2\sqrt{L}Rm^{-1/2} \leq \tau$. This completes the proof of the first part.

\noindent\textbf{Convergence of online SGD.} By \eqref{eq:keyequationSGD}, we have
\begin{align*}
\|\Wb^{(0)} -\Wb^{*} \|_{F}^2  - \|\Wb^{(n)} -\Wb^{{*}} \|_{F}^2 &\geq  \eta \bigg(\sum_{i=1}^{n}L_{i}(\Wb^{(i-1)}) - 2 n \epsilon_{\text{NTRF}} \bigg).
\end{align*}
Dividing by $\eta n$ on the both sides and rearranging terms, we get
\begin{align*}
\frac{1}{n}\sum_{i=1}^{n}L_{i}(\Wb^{(i-1)})&\leq \frac{\|\Wb^{(0)} -\Wb^{*} \|_{F}^2 - \|\Wb^{(n)} -\Wb^{{*}} \|_{F}^2}{\eta n} + 2\epsilon_{\text{NTRF}} \le \frac{L^2R^2}{n}+3\epsilon_{\text{NTRF}},
\end{align*}
where the second inequality follows from facts that $\Wb^*\in \cB(\Wb^{(0)},R\cdot m^{-1/2})$ and $\eta = \Theta\big(m^{-1}\cdot(LR^{2}n^{-1}\epsilon_{\text{NTRF}}^{-1}\wedge L^{-1})\big)$. By Lemma~4.3 in \citep{ji2019polylogarithmic} and the fact that
$\cE_{i}(\Wb^{(i-1)}) \leq L_{i}(\Wb^{(i-1)})$, we have
\begin{align*}
    \frac{1}{n}\sum_{i=1}^{n} L_\cD^{0-1}(\Wb^{(i-1)}) &\leq \frac{2}{n}\sum_{i=1}^{n} \cE_\cD(\Wb^{(i-1)})\notag\\
    &\leq \frac{8}{n}\sum_{i=1}^{n} \cE_{i}(\Wb^{(i-1)})+
    \frac{8\log(1/\delta)}{n}\notag\\
    &\leq \frac{8L^2R^2}{n} + \frac{8\log(1/\delta)}{n} + 24\epsilon_{\text{NTRF}}.
\end{align*}
This completes the proof of the second part. 
\end{proof}

\section{Proof of Results in Section~\ref{sec:discussion}}

\subsection{Proof of Proposition \ref{coro:separable_NTRF}}

We first provide the following lemma which gives an upper bound of the neural network output at the initialization. 
\begin{lemma}[Lemma 4.4 in \citet{cao2019generalizationsgd}]\label{lemma:initial_output}
Under Assumption \ref{assump:unit_norm}, if $m\ge \bar CL\log(nL/\delta)$ with some absolute constant $\bar C$, with probability at least $1-\delta$, we have
\begin{align*}
|f_{\Wb^{(0)}}(\xb_i)|\le C \sqrt{\log(n/\delta)}
\end{align*}
for some absolute constant $C$.
\end{lemma}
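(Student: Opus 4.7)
\textbf{Proof proposal for Lemma~\ref{lemma:initial_output}.}

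The plan is to control $|f_{\Wb^{(0)}}(\xb_i)|$ by a forward-pass analysis. Write $\xb_i^{(0)} = \xb_i$ and $\xb_i^{(l)} = \sigma(\Wb_l^{(0)} \xb_i^{(l-1)})$ for $l = 1, \ldots, L-1$, so that $f_{\Wb^{(0)}}(\xb_i) = m^{1/2} \Wb_L^{(0)} \xb_i^{(L-1)}$. The strategy has two parts: (i) show that the intermediate activation norm $\|\xb_i^{(L-1)}\|_2$ stays close to $\|\xb_i\|_2 = 1$ with high probability, and (ii) condition on $\xb_i^{(L-1)}$ and exploit the independence of $\Wb_L^{(0)}$ from the earlier layers to get a Gaussian tail bound on $f_{\Wb^{(0)}}(\xb_i)$.

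For part (i), I would proceed by induction over $l$. Conditioning on $\xb_i^{(l-1)}$, the entries of $\Wb_l^{(0)} \xb_i^{(l-1)}$ are i.i.d.\ $N(0, 2\|\xb_i^{(l-1)}\|_2^2/m)$, so I can write $\|\xb_i^{(l)}\|_2^2 = (2\|\xb_i^{(l-1)}\|_2^2/m) \sum_{j=1}^m (h_j^+)^2$ with $h_j \sim N(0,1)$ i.i.d. Since $(h_j^+)^2$ has mean $1/2$ and is sub-exponential, Bernstein's inequality yields
\begin{align*}
\Bigl| \tfrac{1}{m}\sum_{j=1}^m (h_j^+)^2 - \tfrac{1}{2} \Bigr| \le C\sqrt{\log(1/\delta')/m} + C\log(1/\delta')/m
\end{align*}
with probability $1-\delta'$. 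Taking $\delta' = \delta/(2nL)$ and using $m \ge \bar C L \log(nL/\delta)$ with $\bar C$ large enough makes the right-hand side at most $1/(4L)$, so $\|\xb_i^{(l)}\|_2^2/\|\xb_i^{(l-1)}\|_2^2 \in [1 - 1/(2L), 1 + 1/(2L)]$. A union bound over $l \in [L-1]$ and $i \in [n]$ then gives, with probability at least $1 - \delta/2$,
\begin{align*}
\|\xb_i^{(L-1)}\|_2^2 \le \bigl(1 + \tfrac{1}{2L}\bigr)^{L-1} \le e^{1/2} \le 2 \qquad \text{for all } i \in [n].
\end{align*}

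For part (ii), I would condition on all layers $\Wb_1^{(0)}, \ldots, \Wb_{L-1}^{(0)}$ and hence on every $\xb_i^{(L-1)}$. Since $\Wb_L^{(0)}$ has entries i.i.d.\ $N(0, 1/m)$ and is independent of those layers, $m^{1/2}\Wb_L^{(0)} \xb_i^{(L-1)}$ is conditionally Gaussian with variance $\|\xb_i^{(L-1)}\|_2^2 \le 2$. A standard Gaussian tail bound plus a union bound over $i \in [n]$ gives $|f_{\Wb^{(0)}}(\xb_i)| \le C\sqrt{\log(n/\delta)}$ for all $i$ with probability at least $1 - \delta/2$ conditional on the high-probability event from part (i). A final union bound combines the two events.

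The main obstacle is part (i): one must ensure the per-layer multiplicative fluctuation is $O(1/L)$ so that the compounding over $L-1$ layers remains bounded, while simultaneously absorbing the $\log(nL/\delta)$ factor coming from the union bound over layers and data points into the width requirement $m \ge \bar C L \log(nL/\delta)$. Careful tuning of the Bernstein confidence level against the desired per-layer accuracy is where all the constants are pinned down; the rest of the argument is routine Gaussian concentration.
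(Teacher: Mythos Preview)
The paper does not give its own proof of this statement: the lemma is quoted directly as Lemma~4.4 of \citet{cao2019generalizationsgd} and invoked as a black box, so there is no in-paper argument to compare your proposal against.

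Evaluating your proposal on its own merits, part~(ii) is fine, but part~(i) has a genuine quantitative gap. You claim that with $\delta' = \delta/(2nL)$ and $m \ge \bar C L\log(nL/\delta)$ for a large enough \emph{absolute} constant $\bar C$, the Bernstein deviation
\[
C\sqrt{\log(1/\delta')/m} + C\log(1/\delta')/m
\]
can be forced below $1/(4L)$. The second term is indeed $O(1/L)$, but the dominant first term satisfies
\[
C\sqrt{\log(1/\delta')/m} \;\asymp\; C\sqrt{\frac{\log(nL/\delta)}{\bar C\,L\,\log(nL/\delta)}} \;=\; \frac{C}{\sqrt{\bar C\,L}},
\]
which is $\Theta(L^{-1/2})$ for any absolute $\bar C$. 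Making it $\le 1/(4L)$ would require $\bar C = \Omega(L)$, i.e.\ $m = \Omega(L^2\log(nL/\delta))$, strictly more than the hypothesis allows. With the stated width your induction only yields $\|\xb_i^{(l)}\|_2^2/\|\xb_i^{(l-1)}\|_2^2 \in [1 - c/\sqrt{L},\,1+c/\sqrt{L}]$, and compounding over $L-1$ layers gives $\|\xb_i^{(L-1)}\|_2^2 \le (1+c/\sqrt{L})^{L-1} = e^{\Theta(\sqrt{L})}$, not an absolute constant; the Gaussian tail bound in part~(ii) then inherits an $e^{\Theta(\sqrt{L})}$ factor that the lemma forbids.

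The missing ingredient is that the layer ratios $Z_l := \|\xb_i^{(l)}\|_2^2/\|\xb_i^{(l-1)}\|_2^2$ are not merely individually concentrated but in fact i.i.d.: conditional on $\xb_i^{(l-1)}$, the law of $Z_l$ depends only on $m$, and the randomness in $\Wb_l^{(0)}$ is fresh. One should therefore control $\log\|\xb_i^{(L-1)}\|_2^2 = \sum_{l=1}^{L-1}\log Z_l$ as a single sum of i.i.d.\ terms rather than bounding each $|Z_l - 1|$ separately. Each $\log Z_l$ has mean $O(1/m)$ and sub-exponential fluctuations at scale $O(m^{-1/2})$, so a Bernstein bound applied to the \emph{sum} gives a deviation of order $\sqrt{(L/m)\log(nL/\delta)} + (L/m)\log(nL/\delta) = O(1)$ precisely under the hypothesis $m \ge \bar C L\log(nL/\delta)$. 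With $\|\xb_i^{(L-1)}\|_2 = O(1)$ established this way, your part~(ii) goes through unchanged.
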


\begin{proof}[Proof of Proposition \ref{coro:separable_NTRF}]
Under Assumption \ref{assump:ntrf_linear_separable}, we can find a collection of matrices $\Ub^* = \{\Ub_1^*,\cdots,\Ub_L^*\}$ with $\sum_{l=1}^L\|\Ub_l^*\|_F^2 = 1$ such that $y_i\la\nabla f_{\Wb^{(0)}}(\xb_i),\Ub^*\ra\ge m^{1/2}\gamma$ for at least $1- \rho$ fraction of the training data.
By Lemma \ref{lemma:initial_output}, for all $i\in[n]$ we have $|f_{\Wb^{(0)}}(\xb_i)|\le C \sqrt{\log(n/\delta)}$ for some absolute constant $C$. Then for any positive constant $\lambda$, we have for at least $1- \rho$ portion of the data,
\begin{align*}
y_i\big(f_{\Wb^{(0)}}(\xb_i) + \la\nabla f_{\Wb^{(0)}},\lambda \Ub^*\ra\big)\ge m^{1/2} \lambda \gamma - C \sqrt{\log(n/\delta)}. 
\end{align*}
For this fraction of data, we can set 
\begin{align*}
\lambda =\frac{C'\big[\log^{1/2}(n/\delta)+\log(1/\epsilon)\big]}{m^{1/2}\gamma},
\end{align*}
where $C'$ is an absolute constant, and get
\begin{align*}
m^{1/2}\lambda \gamma - C \sqrt{\log(n/\delta)} \ge \log(1/\epsilon).
\end{align*}
Now we let $\Wb^* = \Wb^{(0)}+\lambda \Ub^*$. By the choice of $R$ in Proposition \ref{coro:separable_NTRF}, we have $\Wb^{*} \in \cB( \Wb^{(0)}, R\cdot m^{-1/2})$. The above inequality implies that for at least $1- \rho$ fraction of data, we have $\ell\big(y_iF_{\Wb^{(0)},\Wb^{*}}(\xb_{i})\big)\le\epsilon$.
For the rest data, we have
\begin{align*}
y_i\big(f_{\Wb^{(0)}}(\xb_i) + \la\nabla f_{\Wb^{(0)}},\lambda \Ub^*\ra\big)&\ge -C\sqrt{\log(n/\delta)} - \lambda \|\nabla f_{\Wb^{(0)}}\|_2^2 \ge -C_1 R
\end{align*}
for some absolute positive constant $C_1$, where the last inequality follows from fact that $\|\nabla f_{\Wb^{(0)}}\|_2 = \tilde{\cO}(m^{1/2})$ (see Lemma \ref{lemma:choice} for detail). Then note that we use cross-entropy loss, it follows that for this fraction of training data, we have $\ell\big(y_iF_{\Wb^{(0)},\Wb^{*}}(\xb_{i})\big)\le C_2 R$ for some constant $C_2$. Combining the results of these two fractions of training data, we can conclude 
\begin{align*}
\epsilon_{\text{NTRF}}\le n^{-1}\sum_{i=1}^n\ell\big(y_i F_{\Wb^{(0)},\Wb^{*}}(\xb_{i})\big)\le (1- \rho) \epsilon + \rho \cdot \cO(R)
\end{align*}
This completes the proof.

\end{proof}

\subsection{Proof of Proposition \ref{prop:twolayer-separable-globalmin}}
\begin{proof}[Proof of Proposition \ref{prop:twolayer-separable-globalmin}]
We are going to prove that Assumption \ref{assump:kernel_onelayer} implies the existence of a good function in the NTRF function class.

By Definition \ref{def:ntrf} and the definition of cross-entropy loss, our goal is to prove that there exists a collection of matrices $\overline\Wb = \{\overline\Wb_1,\overline\Wb_2\}$ satisfying $\max\{\|\overline\Wb_1-\Wb_1^{(0)}\|_F,\|\overline\Wb_2 - \Wb_2^{(0)}\|_2\}\le R\cdot m^{-1/2}$ such that 
\begin{align*}
y_i\cdot\big[f_{\Wb^{(0)}}(\xb_i) + \la\nabla_{\Wb_1} f_{\Wb^{(0)}},\overline\Wb_1 - \Wb_1^{(0)}\ra + \la\nabla_{\Wb_2} f_{\Wb^{(0)}},\overline\Wb_2 - \Wb_2^{(0)}\ra\big] \ge \log(2/\epsilon).
\end{align*}
We first consider $\nabla_{\Wb_1}f_{\Wb^{(0)}}(\xb_{i})$, which has the form
\begin{align*}
(\nabla_{\Wb_1}f_{\Wb^{(0)}}(\xb_i)\big)_j = m^{1/2}\cdot w_{2,j}^{(0)}\cdot \sigma'(\la\wb_{1,j}^{(0)},\xb_i\ra) \cdot \xb_i.
\end{align*}
Note that $w_{2,j}^{(0)}$ and $\wb_{1,j}^{(0)}$ are independently generated from $\cN(0,1/m)$ and $\cN(0,2\Ib/m)$ respectively, thus we have $\PP(|w_{2,j}^{(0)}|\ge 0.47m^{-1/2})\ge 1/2$. By Hoeffeding's inequality, we know that with probability at least $1-\exp(-m/8)$, there are at least $m/4$ nodes, whose union is denoted by $\cS$, satisfying $|w_{2,j}^{(0)}|\ge 0.47m^{-1/2}$. 
Then we only focus on the nodes in the set $\cS$. Note that $\Wb_{1}^{(0)}$ and $\Wb_2^{(0)}$ are independently generated. Then by Assumption \ref{assump:kernel_onelayer} and Hoeffeding's inequality, there exists a function $\overline\ub(\cdot): \RR^d\rightarrow\RR^d$ such that with probability at least $1-\delta'$,
\begin{align*}
\frac{1}{|\cS|}\sum_{j\in \cS}y_i\cdot \la\overline \ub(\wb_{1,j}^{(0)}),\xb_i\ra\cdot \sigma'(\la\wb_{1,j}^{(0)},\xb_i\ra) \ge \gamma - \sqrt{\frac{2\log(1/\delta')}{|\cS|}}.
\end{align*}
Define $\vb_j=\overline\ub(\wb_{1,j}^{(0)})/w_{2,j}$ if $|w_{2,j}|\ge 0.47m^{-1/2}$ and $\vb_j = \boldsymbol{0}$ otherwise. Then we have
\begin{align*}
\sum_{j=1}^m y_i\cdot w_{2,j}^{(0)} \cdot \la \vb_j,\xb_i\ra \cdot \sigma'(\la\wb_{1,j}^{(0)},\xb_i\ra) &= \sum_{j\in \cS}y_i\cdot\la \overline\ub(\wb_{1,j}^{(0)}),\xb_i\ra \cdot \sigma'(\la\wb_{1,j}^{(0)},\xb_i\ra)\notag\\
&\ge |\cS|\gamma - \sqrt{2|\cS|\log(1/\delta')}.
\end{align*}
Set $\delta = 2n\delta'$ and apply union bound, we have with probability at least $1-\delta/2$,
\begin{align*}
\sum_{j=1}^m y_i\cdot w_{2,j}^{(0)} \cdot \la \vb_j,\xb_i\ra \cdot \sigma'(\la\wb_{1,j}^{(0)},\xb_i\ra)
&\ge |\cS|\gamma - \sqrt{2|\cS|\log(2n/\delta)}.
\end{align*}
Therefore, note that with probability at least $1-\exp(-m/8)$, we have $|\cS|\ge m/4$. Moreover, in Assumption~\ref{assump:kernel_onelayer}, by $ y_i\in \{\pm 1\}$ and $|\sigma'(\cdot)|, \|\overline\ub(\cdot)\|_2, \| \xb_i \|_2 \leq 1$ for $i=1,\ldots, n$, we see that $\gamma \leq 1$. 
Then if $m \geq 32\log(n/\delta) / \gamma^2$, with probability at least $1-\delta/2 - \exp\big(-4\log(n/\delta)/\gamma^2\big)\ge 1-\delta$, 
\begin{align*}
\sum_{j=1}^m y_i\cdot w_{2,j}^{(0)} \cdot \la \vb_j,\xb_i\ra \cdot \sigma'(\la\wb_{1,j}^{(0)},\xb_i\ra)\ge |\cS|\gamma/2.    
\end{align*}
Let $\Ub = (\vb_1,\vb_2,\cdots,\vb_m)^\top/\sqrt{m|\cS|}$, we have
\begin{align*}
y_i\la\nabla_{\Wb_1} f_{\Wb^{(0)}}(\xb_i), \Ub\ra = \frac{1}{\sqrt{|\cS|}}\sum_{j=1}^m y_i\cdot w_{2,j}^{(0)} \cdot \la \vb_j,\xb_i\ra \cdot \sigma'(\la\wb_{1,j}^{(0)},\xb_i\ra) \ge \frac{\sqrt{|\cS|}\gamma}{2}\ge \frac{m^{1/2}\gamma}{4},
\end{align*}
where the last inequality is by the fact that $|\cS|\ge m/4$. Besides, note that by concentration and Gaussian tail bound, we have $|f_{\Wb^{(0)}}(\xb_i)|\le C\log(n/\delta)$ for some absolute constant $C$. 
Therefore, let $\overline\Wb_1 = \Wb_1^{(0)} + 4\big(\log(2/\epsilon) +  C\log(n/\delta)\big)m^{-1/2}\Ub/\gamma$ and $\overline\Wb_2 = \Wb_2^{(0)}$, we have
\begin{align}
y_i\cdot\big[f_{\Wb^{(0)}}(\xb_i) + \la\nabla_{\Wb_1} f_{\Wb^{(0)}},\overline\Wb_1 - \Wb_1^{(0)}\ra + \la\nabla_{\Wb_2} f_{\Wb^{(0)}},\overline\Wb_2 - \Wb_2^{(0)}\ra\big] \ge \log(2/\epsilon)\label{eq:matus}.
\end{align}
Note that $\|\overline \ub(\cdot)\|_2\le 1$, we have $\|\Ub\|_F\le 1/0.47\le 2.2$. Therefore, we further have $\|\overline\Wb_1 - \Wb_1^{(0)}\|_F\le 8.8\gamma^{-1}\big(\log(2/\epsilon) +  C\log(n/\delta)\big)\cdot m^{-1/2}$. This implies that $\overline\Wb\in \cB(\Wb^{(0)}, R)$ with $R = \cO\big(\log\big(n/(\delta\epsilon)\big)/\gamma\big)$. Applying the inequality $\ell(\log(2/\epsilon))\leq \epsilon$ on \eqref{eq:matus} gives
\begin{align*}
\ell(y_i\cdot F_{\Wb^{(0)}, \overline{\Wb}}(\xb_{i}))\leq \epsilon
\end{align*}
for all $i=1,\ldots, n$. This completes the proof.
\end{proof}

\subsection{Proof of Proposition \ref{thm:data_separable_globalmin}}

Based on our  theoretical analysis, the major goal is to show that there exist certain choices of $R$ and $m$ such that the best NTRF model in the function class $\cF(\Wb^{(0)}, R)$ can achieve $\epsilon$ training error. In this proof, we will prove a stronger results by showing that given the quantities of $R$ and $m$ specificed in Proposition \ref{thm:data_separable_globalmin}, there exists a NTRF model with parameter $\Wb^*$ that satisfies $n^{-1}\sum_{i=1}^n\ell\big(y_iF_{\Wb^{(0)},\Wb^*}(\xb_i)\big)\le \epsilon$.

In order to do so, we consider training the NTRF model via a different surrogate loss function. Specifically, we consider squared hinge loss $\tilde\ell(x) = \big(\max\{\lambda - x, 0\}\big)^2$, where $\lambda$ denotes the target margin. In the later proof, we choose $\lambda = \log(1/\epsilon) +1$ such that the condition $\tilde \ell(x)\le 1$ can guarantee that $x \geq \log(\epsilon)$. Moreover, we consider using gradient flow, i.e., gradient descent with infinitesimal step size, to train the NTRF model.
Therefore, in the remaining part of the proof, we consider optimizing the NTRF parameter $\Wb$ with the loss function
\begin{align*}
\tilde L_S(\Wb) = \frac{1}{n}\sum_{i=1}^n\tilde\ell\big(y_iF_{\Wb^{(0)},\Wb}(\xb_i)\big).
\end{align*}
Moreover, for simplicity, we only consider optimizing parameter in the last hidden layer (i.e., $\Wb_{L-1}$). Then the gradient flow can be formulated as
\begin{align*}
\frac{\dd \Wb_{L-1}(t)}{\dd t} = -\nabla_{\Wb_{L-1}} \tilde L_S(\Wb(t)),\quad \frac{\dd \Wb_{l}(t)}{\dd t} = \boldsymbol{0}\quad \mbox{for any $l\neq L-1$}.
\end{align*}
Note that the NTRF model is a linear model, thus by Definition \ref{def:ntrf}, we have
\begin{align}\label{eq:grad_ntrf_nondegenerate}
\nabla_{\Wb_{L-1}} \tilde L_S(\Wb(t)) &= y_i\tilde\ell'\big(y_iF_{\Wb^{(0)},\Wb(t)}(\xb_i)\big)\cdot \nabla_{\Wb_{L-1}}F_{\Wb^{(0)},\Wb(t)}(\xb_i)\notag\\
&=y_i\tilde\ell'\big(y_iF_{\Wb^{(0)},\Wb(t)}(\xb_i)\big)\cdot \nabla_{\Wb_{L-1}^{(0)}}f_{\Wb^{(0)}}(\xb_i).
\end{align}
Then it is clear that $\nabla_{\Wb_{L-1}} \tilde L_S(\Wb(t))$ has fixed direction throughout the optimization.

In order to prove the convergence of gradient flow and characterize the quantity of $R$, We first provide the following lemma which gives an upper bound of the NTRF model output at the initialization. 

Then we provide the following lemma which characterizes a lower bound of the Frobenius norm of the partial gradient $\nabla_{\Wb_{L-1}} \tilde L_S(\Wb)$.

\begin{lemma}[Lemma B.5 in \citet{zou2019gradient}]\label{lemma:grad_lower_bounds}
Under Assumptions \ref{assump:unit_norm} and  \ref{assump:data_separation}, if $m = \tilde \Omega(n^2\phi^{-1})$, then for all $t\ge 0$,
with probability at least $1- \exp\big(-O(m\phi/n)\big)$, there exist a positive constant $C$ such that
\begin{align*}
\|\nabla_{\Wb_{L-1}}\tilde L_S( \Wb(t))\|_F^2&\ge   \frac{Cm\phi}{n^5}\bigg[\sum_{i=1}^n\tilde\ell'\big(y_iF_{\Wb^{(0)},\Wb(t)}(\xb_i)\big)\bigg]^2.
\end{align*}
\end{lemma}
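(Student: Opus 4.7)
The key observation is that the NTRF model $F_{\Wb^{(0)},\Wb}(\xb_i)$ is linear in $\Wb-\Wb^{(0)}$, so by Definition~\ref{def:ntrf} the per-sample partial gradient
$\Gb_i := \nabla_{\Wb_{L-1}^{(0)}} F_{\Wb^{(0)},\Wb}(\xb_i) = \nabla_{\Wb_{L-1}^{(0)}} f_{\Wb^{(0)}}(\xb_i)$
is frozen at initialization. Combining with \eqref{eq:grad_ntrf_nondegenerate},
\[
\nabla_{\Wb_{L-1}}\tilde L_S(\Wb(t)) \;=\; \frac{1}{n}\sum_{i=1}^n \alpha_i(t)\,\Gb_i, \qquad \alpha_i(t) := y_i\,\tilde\ell'\!\big(y_i F_{\Wb^{(0)},\Wb(t)}(\xb_i)\big),
\]
and $\mathrm{sign}(\alpha_i) = -y_i$ since $\tilde\ell'\le 0$. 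All randomness in the claim lies in $\Wb^{(0)}$, so the goal is to prove a single high-probability event on $\Wb^{(0)}$ that gives the Frobenius lower bound for every admissible coefficient vector $\{\alpha_i\}$; the bound for every $t\ge 0$ then follows simultaneously.

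\textbf{Witness-neuron construction.} For each candidate anchor $i^*\in[n]$, I construct a unit-Frobenius test matrix $\Ub(i^*)$ such that (a) $|\langle \Ub(i^*),\Gb_{i^*}\rangle| \gtrsim \sqrt{m\phi/n}$; (b) $|\langle \Ub(i^*),\Gb_j\rangle| \le O(\sqrt{m\phi/n})/n$ for every $j$ with $y_j\neq y_{i^*}$; and (c) for same-class $j$ (where $\mathrm{sign}(\alpha_j)=\mathrm{sign}(\alpha_{i^*})$), $\langle \Ub(i^*),\Gb_j\rangle$ is either negligible or sign-aligned with $\langle \Ub(i^*),\Gb_{i^*}\rangle$, so that these terms cannot cancel the main contribution. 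Using the rank-one form $\Gb_i = \sqrt{m}\,\mathbf{b}_i \mathbf{h}_i^\top$ with $\mathbf{b}_i = \Sigma_{L-1,\xb_i}(\Wb_L^{(0)})^\top$ and $\mathbf{h}_i = \sigma(\xb_i^{(L-2)})$, $\Ub(i^*)$ will be supported on a set $\cS(i^*)\subseteq [m]$ of witness neurons at layer $L-1$ whose preactivation on $\xb_{i^*}$ lies in a $\Theta(\sqrt{\phi/n})$-wide window chosen to exclude the preactivations on all opposite-class samples. Gaussian anti-concentration of the preactivations, combined with a standard forward-propagation lemma from \citet{allen2018convergence,zou2019improved} that transports the input separation $\phi$ through the $L-2$ lower ReLU layers to yield $\|\mathbf{h}_{i^*}-\mathbf{h}_j\|_2 = \Omega(\phi)$ for $y_j\neq y_{i^*}$, will produce such a set with $|\cS(i^*)|\ge \Omega(m\sqrt{\phi/n})$ with probability $1-\exp(-\Omega(m\phi/n))$ via Hoeffding across independent neurons.

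\textbf{Pigeonhole and combination.} Taking $i^*=\arg\max_i |\alpha_i(t)|$, so that $|\alpha_{i^*}|\ge \|\alpha\|_1/n$, and combining (a)--(c) gives
\[
\big\|\nabla_{\Wb_{L-1}}\tilde L_S(\Wb(t))\big\|_F \;\ge\; \Big|\Big\langle \Ub(i^*),\,\nabla_{\Wb_{L-1}}\tilde L_S(\Wb(t))\Big\rangle\Big| \;\gtrsim\; \frac{|\alpha_{i^*}|}{n}\sqrt{\frac{m\phi}{n}} \;\gtrsim\; \frac{\|\alpha\|_1}{n^2}\sqrt{\frac{m\phi}{n}}.
\]
Squaring and noting $\|\alpha\|_1 = \sum_i |\tilde\ell'_i| = |\sum_i \tilde\ell'_i|$ yields exactly the stated lower bound. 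A union bound over the $n$ possible values of $i^*$ then controls the overall failure probability.

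\textbf{Main obstacle.} The delicate step is the construction and analysis of $\cS(i^*)$. It requires (i) anti-concentration of the Gaussian preactivations at layer $L-1$, for which we need $\|\mathbf{h}_{i^*}\|_2 = \Theta(1)$ at initialization; (ii) a forward-propagation bound that keeps $\|\mathbf{h}_{i^*}-\mathbf{h}_j\|_2$ proportional to $\|\xb_{i^*}-\xb_j\|_2 \ge \phi$ through $L-2$ layers, which is precisely where the width condition $m = \tilde\Omega(n^2\phi^{-1})$ is consumed (and where the stronger condition in Proposition~\ref{thm:data_separable_globalmin} buys the desired constants); and (iii) careful conditioning to decouple the randomness in $\Wb_L^{(0)}$ (which enters $\mathbf{b}_i$) from that in the lower-layer weights (which define $\cS(i^*)$ and $\mathbf{h}_i$), so that the cardinality bound on $\cS(i^*)$ can be established conditionally on the lower layers and then lifted via the independence of $\Wb_L^{(0)}$.
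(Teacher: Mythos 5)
The first thing to note is that the paper does not prove this lemma from scratch: it is imported (up to a change of model) from Lemma~B.5 of \citet{zou2019gradient}, and the only argument the paper supplies is the remark following the lemma statement --- namely that, because $F_{\Wb^{(0)},\Wb}$ is linear in $\Wb$, the gradient $\nabla_{\Wb_{L-1}}\tilde L_S(\Wb(t))$ is a fixed linear combination of the frozen initialization gradients $\nabla_{\Wb_{L-1}}f_{\Wb^{(0)}}(\xb_i)$ (see \eqref{eq:grad_ntrf_nondegenerate}), so the prior-work bound transfers and holds simultaneously for all $t\ge 0$ on a single high-probability event over $\Wb^{(0)}$. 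Your opening paragraph is exactly this observation, so on the one step the paper actually argues, you agree with it; everything after that is your own reconstruction of the imported result.

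That reconstruction has the right skeleton (pigeonhole on $i^*=\arg\max_i|\alpha_i(t)|$, a witness set $\cS(i^*)$ of last-hidden-layer neurons selected by a preactivation window that opposite-class examples avoid thanks to the forward-propagated separation $\phi$, anti-concentration plus a Chernoff-type bound for $|\cS(i^*)|$, and a union bound over the $n$ anchors), but as a standalone proof it is incomplete: the simultaneous existence of one $\Ub(i^*)$ satisfying (a)--(c) is asserted rather than constructed, and your ``main obstacle'' list defers exactly the steps that carry the content. Two concrete issues to fix. First, in (c), sign-alignment for same-class $j$ cannot come from Assumption~\ref{assump:data_separation}, which says nothing about same-class pairs; it comes from the entrywise nonnegativity of post-ReLU features, so that taking the rows of $\Ub(i^*)$ proportional to $\mathrm{sign}(w_{L,k})\,\mathbf{h}_{i^*}^\top$ makes every same-class contribution nonnegative --- this must be built into the construction, not left as a dichotomy. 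Second, your exponents are internally inconsistent: a unit-Frobenius $\Ub(i^*)$ supported on $|\cS(i^*)|$ rows, each row of $\nabla_{\Wb_{L-1}}f_{\Wb^{(0)}}(\xb_{i^*})$ having norm $\Theta(1)$, gives $|\la\Ub(i^*),\nabla_{\Wb_{L-1}}f_{\Wb^{(0)}}(\xb_{i^*})\ra|\asymp\sqrt{|\cS(i^*)|}$; matching your (a), the final $n^{-5}$, and the stated failure probability $\exp(-O(m\phi/n))$ all force $|\cS(i^*)|=\Omega(m\phi/n)$ (i.e.\ window mass $\Theta(\phi/n)$), not the $\Omega(m\sqrt{\phi/n})$ you claim. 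Given that the paper itself outsources this argument, the efficient route is the paper's: verify the linearity reduction carefully (which you did) and cite the prior lemma, rather than reprove it.
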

We slightly modified the original version of this lemma since we use different models (we consider NTRF model while \citet{zou2019gradient} considers neural network model). However, by \eqref{eq:grad_ntrf_nondegenerate}, it is clear that the gradient $\nabla \tilde L_S(\Wb)$ can be regarded as a type of the gradient for neural network model  at the initialization (i.e., $\nabla_{\Wb_{L-1}} L_{S}(\Wb^{(0)})$)
is valid. Now we are ready to present the proof.


\begin{proof}[Proof of Proposition \ref{thm:data_separable_globalmin}]
Recall that we only consider training the last hidden weights, i.e., $\Wb_{L-1}$, via gradient flow with squared hinge loss, and our goal is to prove that gradient flow is able to find a NTRF model within the function class $\cF(\Wb^{(0)}, R)$ around the initialization, i.e., achieving $n^{-1}\sum_{i=1}^n\ell\big(y_iF_{\Wb^{(0)},\Wb^*}(\xb_i)\big)\le \epsilon$. Let $\Wb(t)$ be the weights at time $t$, gradient flow implies that
\begin{align*}
\frac{\dd \tilde L_S(\Wb(t))}{\dd t} = -\|\nabla_{\Wb_{L-1}} \tilde L_S(\Wb(t))\|_F^2 \le  - \frac{Cm\phi}{n^5}\bigg(\sum_{i=1}^n\tilde \ell'\big(y_i F_{\Wb^{(0)},\Wb(t)}(\xb_i)\big)\bigg)^2 = \frac{4Cm\phi \tilde L_S(\Wb(t))}{n^3},
\end{align*}
where the first equality is due to the fact that we only train the last hidden layer, the first inequality is by Lemma \ref{lemma:grad_lower_bounds} and the second equality follows from the fact that $\tilde\ell'(\cdot) = -2\sqrt{\tilde\ell(\cdot)}$. Solving the above inequality gives
\begin{align}\label{eq:contraction_loss}
\tilde L_S(\Wb(t))\le \tilde L_S(\Wb(0))\cdot \exp\bigg(-\frac{4Cm\phi t}{n^3}\bigg).
\end{align}
Then, set $T = \cO\big(n^3m^{-1}\phi^{-1}\cdot\log(\tilde L_S(\Wb(0))/\epsilon')\big)$ and $\epsilon' = 1/n$, we have $\tilde L_S(\Wb(t))\le \epsilon'$. Then it follows that $\tilde\ell\big(y_iF_{\Wb^{(0)},\Wb(t)}(\xb_i)\big)\le 1$, which implies that  $y_iF_{\Wb^{(0)},\Wb(t)}(\xb_i)\geq \log(\epsilon)$ and thus $n^{-1}\sum_{i=1}^n\ell\big(y_iF_{\Wb^{(0)},\Wb^*}(\xb_i)\big)\le \epsilon$. Therefore, $\Wb(T)$ is exactly the NTRF model we are looking for.

The next step is to characterize the distance between $\Wb(T)$ and $\Wb(0)$ in order to characterize the quantity of $R$.
Note that $\|\nabla_{\Wb_{L-1}}\tilde L_S(\Wb(t))\|_F^2\ge 4Cm\phi \tilde L_S(\Wb(t))/n^3$, we have 
\begin{align*}
\frac{\dd \sqrt{\tilde L_S(\Wb(t))}}{\dd t} = -\frac{\|\nabla_{\Wb_{L-1}}\tilde L_S(\Wb(t))\|_F^2}{2\sqrt{\tilde L_S(\Wb(t))}}\le -\|\nabla_{\Wb_{L-1}}\tilde L_S(\Wb(t))\|_F\cdot \frac{C^{1/2}m^{1/2}\phi^{1/2}}{n^{3/2}}.
\end{align*}
Taking integral on both sides and rearranging terms, we have
\begin{align*}
\int_{t=0}^T \|\nabla_{\Wb_{L-1}}\tilde L_S(\Wb(t))\|_F \dd t\le \frac{n^{3/2}}{C^{1/2}m^{1/2}\phi^{1/2}}\cdot \left(\sqrt{\tilde L_S(\Wb(0))} - \sqrt{\tilde L_S(\Wb(t))}\right).
\end{align*}
Note that the L.H.S. of the above inequality is an upper bound of $\|\Wb(t) - \Wb(0)\|_F$, we have for any $t\ge 0$,
\begin{align*}
\|\Wb(t) - \Wb(0)\|_F\le  \frac{n^{3/2}}{C^{1/2}m^{1/2}\phi^{1/2}}\cdot \sqrt{\tilde L_S(\Wb(0))} = \cO\bigg(\frac{n^{3/2} \log\big(n/(\delta\epsilon)\big)}{m^{1/2}\phi^{1/2}}\bigg),
\end{align*}
where the second inequality is by Lemma \ref{lemma:initial_output} and our choice of $\lambda = \log(1/\epsilon)+1$. This implies that there exists a point $\Wb^*$ within the class $\cF(\Wb^{(0)},R)$ with
\begin{align*}
R = \cO\bigg(\frac{n^{3/2} \log\big(n/(\delta\epsilon)\big)}{\phi^{1/2}}\bigg)
\end{align*}
such that 
\begin{align*}
\epsilon_{\text{NTRF}}:=n^{-1}\sum_{i=1}^n\ell\big(y_iF_{\Wb^{(0)},\Wb^*}(\xb_i)\big)\le \epsilon.
\end{align*}
Then by Theorem \ref{thm:convergence_gd}, and, more specifically, \eqref{eq:dependency_m*}, we can compute the minimal required neural network width as follows,
\begin{align*}
m = \tilde \Omega(R^8L^{22}) = \tilde \Omega\Bigg(\frac{L^{22}n^{12}}{\phi^4}\Bigg).
\end{align*}
This completes the proof.
\end{proof}

\section{Proof of Technical Lemmas}
Here we provide the proof of Lemmas~\ref{lemma:main_lemma}, \ref{lemma:generalization_rademacher} and \ref{lemma:main_lemma_online}.

\subsection{Proof of Lemma \ref{lemma:main_lemma}}
The detailed proof of Lemma~\ref{lemma:main_lemma} is given as follows.

\begin{proof}[Proof of Lemma \ref{lemma:main_lemma}]
Based on the update rule of gradient descent, i.e., $\Wb^{(t+1)} = \Wb^{(t)} - \eta\nabla_{\Wb}L_{S}(\Wb^{(t)})$, we have the following calculation.
\begin{align}\label{eq:twoparts}
&\|\Wb^{(t)} -\Wb^{*} \|_{F}^2  - \|\Wb^{(t+1)} -\Wb^{{*}} \|_{F}^2 \notag\\
&\qquad= \underbrace{\frac{2\eta}{n}\sum_{i=1}^{n}\la\Wb^{(t)} -\Wb^{*},  \nabla_{\Wb} L_{i}(\Wb^{(t)})\ra}_{I_1} - \underbrace{\eta^{2} \sum_{l=1}^L \|\nabla_{\Wb_l}L_{S}(\Wb^{(t)})\|_{F}^{2}}_{I_2},
\end{align}
where the equation follows from the fact that $L_{S}(\Wb^{(t)}) = n^{-1}\sum_{i=1}^n L_i(\Wb^{(t)})$. In what follows,
we first bound the term $I_1$ on the R.H.S. of  \eqref{eq:twoparts} by approximating the neural network functions with linear models. 
By assumption, for $t=0,\ldots, t'-1$, $\Wb^{(t)}, \Wb^{*} \in \mathcal{B}(\Wb^{(0)}, \tau)$. 
Therefore by the definition of $\epsilon_{\mathrm{app}}(\tau)$,
\begin{align}
y_i\cdot\langle \nabla f_{\Wb^{(t)}}(\xb_{i}),\Wb^{(t)}- \Wb^*\rangle &\leq    y_i\cdot\big(f_{\Wb^{(t)}}(\xb_{i}) -f_{\Wb^*}(\xb_{i})\big)+ \epsilon_{\mathrm{app}}(\tau)\label{eq:epsilon-bound1}
\end{align}
Moreover, we also have
\begin{align}
     0 &\leq y_i\cdot\big(f_{\Wb^*}(\xb_{i}) - f_{\Wb^{(0)}}(\xb_{i}) - \langle \nabla f_{\Wb^{(0)}}(\xb_{i}),  \Wb^* - \Wb^{(0)}\rangle \big) + \epsilon_{\mathrm{app}}(\tau) \nonumber \\
     & = y_i\cdot\big(f_{\Wb^*}(\xb_{i}) - F_{\Wb^{(0)},\Wb^*}(\xb_i) \big)+ \epsilon_{\mathrm{app}}(\tau),
    \label{eq:epsilon-bound3}
\end{align}
where the equation follows by the definition of $F_{\Wb^{(0)},\Wb^*}(\xb)$. 
Adding \eqref{eq:epsilon-bound3} to \eqref{eq:epsilon-bound1} and canceling the terms $y_i \cdot f_{\Wb^*}(\xb_{i})$, we obtain that
\begin{align}
y_i\cdot\langle \nabla f_{\Wb^{(t)}}(\xb_{i}),\Wb^{(t)}- \Wb^*\rangle &\leq    y_i\cdot\big(f_{\Wb^{(t)}}(\xb_{i}) -F_{\Wb^{(0)},\Wb^*}(\xb_i) \big)+ 2\epsilon_{\mathrm{app}}(\tau)\label{eq:epsilon-bounds}.
\end{align}

We can now give a lower bound on first term on the R.H.S. of \eqref{eq:twoparts}. 
For $i=1,\ldots,n$, applying the chain rule on the loss function gradients and utilizing  \eqref{eq:epsilon-bounds}, we have
\begin{align}
\la\Wb^{(t)} -\Wb^{*} , \nabla_{\Wb}L_{i}(\Wb^{(t)})\ra&=\ell'\big(y_{i}f_{\Wb^{(t)}}(\xb_{i})\big)\cdot y_i\cdot \la\Wb^{(t)} -\Wb^{*}, \nabla_{\Wb}f_{\Wb^{(t)}}(\xb_{i})\ra \notag \notag\\
&\ge  \ell'\big(y_{i}f_{\Wb^{(t)}}(\xb_{i})\big)\cdot\big(y_{i}f_{\Wb^{(t)}}(\xb_{i}) - y_{i}f_{\Wb^*}(\xb_{i}) +2\epsilon_{\mathrm{app}}(\tau)\big) \notag\\
&\ge (1- 2\epsilon_{\mathrm{app}}(\tau))\ell\big(y_{i}f_{\Wb^{(t)}}(\xb_{i}) \big) - \ell\big(y_{i}F_{\Wb^{(0)},\Wb^*}(\xb_i)\big), \label{eq:part1}
\end{align}
where the first inequality is by the fact that  $\ell'\big(y_{i}f_{\Wb^{(t)}}(\xb_{i})\big)< 0$, the second inequality is by convexity of $\ell(\cdot)$ and the fact that $-\ell'\big(y_{i}f_{\Wb^{(t)}}(\xb_{i})\big) \leq \ell\big(y_{i}f_{\Wb^{(t)}}(\xb_{i})\big) $.

We now proceed to bound the term $I_2$ on the R.H.S. of \eqref{eq:twoparts}. Note that we have $\ell'(\cdot) < 0$, and therefore the Frobenius norm of the gradient $\nabla_{\Wb_l}L_{S}(\Wb^{(t)})$ can be upper bounded as follows,
\begin{align*}
\|\nabla_{\Wb_l}L_{S}(\Wb^{(t)})\|_{F} &= \bigg\|\frac{1}{n}\sum_{i=1}^{n}\ell'\big(y_{i}f_{\Wb^{(t)}}(\xb_i)\big)\nabla_{\Wb_{l}}f_{\Wb^{(t)}}(\xb_{i})\bigg\|_{F} \\
&\leq \frac{1}{n}\sum_{i=1}^{n}-\ell'\big(y_{i}f_{\Wb^{(t)}}(\xb_i)\big)\cdot \|\nabla_{\Wb_l} f_{\Wb^{(t)}}(\xb_i)\|_{F},
\end{align*}
where the inequality follows by triangle inequality.  We now utilize the fact that cross-entropy loss satisfies the inequalities $- \ell'(\cdot) \leq \ell(\cdot)$ and $-\ell'(\cdot)\le 1$. Therefore by definition of $M(\tau)$, we have
\begin{align}
\sum_{l=1}^L \|\nabla_{\Wb_l}L_{S}(\Wb^{(t)})\|_{F}^2  &\le \cO\big(LM(\tau)^2\big)\cdot \bigg(\frac{1}{n}\sum_{i=1}^{n}-\ell'\big(y_{i}f_{\Wb^{(t)}}(\xb_i)\big)\bigg)^2\nonumber\\
& \le \cO\big(LM(\tau)^{2} \big) \cdot L_S(\Wb^{(t)}). \label{eq:part2}
\end{align}
Then we can plug \eqref{eq:part1} and \eqref{eq:part2} into \eqref{eq:twoparts} and obtain
\begin{align*}
&\|\Wb^{(t)} -\Wb^{*} \|_{F}^2  - \|\Wb^{(t+1)} -\Wb^{{*}} \|_{F}^2 \\
&\quad\geq  \frac{2\eta}{n} \sum_{i=1}^{n} \Big[(1-2\epsilon_{\mathrm{app}}(\tau))\ell\big(y_{i}f_{\Wb^{(t)}}(\xb_{i})\big) - \ell\big(y_{i}F_{\Wb^{(0)},\Wb^*}(\xb_i) \big)\Big] - \mathcal{O}\big(\eta^2 LM(\tau)^2\big) \cdot  L_{S}(\Wb^{(t)}) \\
&\quad\geq \bigg[\frac{3}{2}-4\epsilon_{\mathrm{app}}(\tau)\bigg]\eta L_{S}(\Wb^{(t)}) - \frac{2\eta}{n} \sum_{i=1}^{n}\ell\big(y_{i}F_{\Wb^{(0)},\Wb^*}(\xb_i) \big),
\end{align*}
where the last inequality is by $\eta = \mathcal{O}(L^{-1}M(\tau)^{-2})$ and merging the third term on the second line into the first term.
Taking telescope sum from $t=0$ to $t=t'-1$ and plugging in the definition $\frac{1}{n} \sum_{i=1}^{n}\ell\big(y_{i}F_{\Wb^{(0)},\Wb^*}(\xb_i) \big) = \epsilon_{\text{NTRF}}$
completes the proof.
\end{proof}

\subsection{Proof of Lemma~\ref{lemma:generalization_rademacher}}
\begin{proof}[Proof of Lemma~\ref{lemma:generalization_rademacher}]
We first denote $\cW = \cB(\Wb^{(0)}, \tilde{R} \cdot m^{-1/2 } )$, and define the corresponding neural network function class and surrogate loss function class as $\cF = \{ f_\Wb(\xb) : \Wb \in  \cW\}$ and $\cG = \{ -\ell [ y \cdot f_\Wb(\xb)] : \Wb \in  \cW\}$ respectively.

By standard uniform convergence results in terms of empirical Rademacher complexity \citep{bartlett2002rademacher,mohri2018foundations,shalev2014understanding}, with probability at least $1 - \delta$ we have
\begin{align*}
    \sup_{ \Wb \in \cW } | \cE_S(\Wb) - \cE_\cD(\Wb) |
    &= \sup_{ \Wb \in \cW } \Bigg| - \frac{1}{n} \sum_{i=1}^{n} \ell'\big[y_i \cdot f_{\Wb}(\bx_i) \big] + \EE_{(\xb,y)\sim \cD}  \ell'\big[y \cdot f_{\Wb}(\xb) \big]  \Bigg|\\
    &\leq  2\hat{\mathfrak{R}}_n(\cG) + C_1 \sqrt{\frac{\log(1/\delta)}{n}}, 
\end{align*}
where $C_1$ is an absolute constant, and
\begin{align*}
    \hat{\mathfrak{R}}_n(\cG) = \EE_{\xi_i\sim \mathrm{Unif}(\{\pm 1\})} \Bigg\{ \sup_{ \Wb \in \cW} \frac{1}{n} \sum_{i=1}^n \xi_i \ell'\big[y_i \cdot f_{\Wb}(\bx_i) \big] \Bigg\}
\end{align*}
is the empirical Rademacher complexity of the function class $\cG$. 
We now provide two bounds on $\hat{\mathfrak{R}}_n(\cG)$, whose combination gives the final result of Lemma~\ref{lemma:generalization_rademacher}. First,
by Corollary~5.35 in \citep{vershynin2010introduction}, with probability at least $1 - L\cdot \exp(-\Omega(m))$, $\| \Wb^{(0)}_l \|_2 \leq 3$ for all $l\in [L]$. Therefore for all $\Wb \in \cW$, we have $\| \Wb_l \|_2 \leq 4$. Moreover,  standard concentration inequalities on the norm of the first row of $\Wb_l^{(0)}$ also implies that $\| \Wb_l \|_2 \geq 0.5$ for all $\Wb \in \cW$ and $l\in [L]$. Therefore,
an adaptation of the bound in \citep{bartlett2017spectrally}\footnote{\citet{bartlett2017spectrally} only proved the Rademacher complexity bound for the composition of the ramp loss and the neural network function. In our setting essentially the ramp loss is replaced with the $-\ell'( \cdot )$ function, which is bounded and $1$-Lipschitz continuous. The proof in our setting is therefore exactly the same as the proof given in \citep{bartlett2017spectrally}, and we can apply Theorem~3.3 and Lemma~A.5 in \citep{bartlett2017spectrally} to obtain the desired bound we present here.} gives
\begin{align}
    \hat{\mathfrak{R}}_n(\cF) &\leq  \tilde \cO\Bigg( \sup_{\Wb \in \cW} \Bigg\{ \frac{m^{1/2}}{\sqrt{n}}\cdot \Bigg[\prod_{l=1}^L \| \Wb_l \|_2\Bigg] \cdot \Bigg[ \sum_{l=1}^L \frac{\| \Wb_l^{\top} - \Wb_l^{(0)\top} \|_{2,1}^{2/3}}{ \| \Wb_l \|_{2}^{2/3} } \Bigg]^{3/2} \Bigg\} \Bigg)\nonumber\\
    &\leq \tilde \cO\Bigg( \sup_{\Wb \in \cW} \Bigg\{ \frac{4^L m^{1/2} }{\sqrt{n}} \cdot \Bigg[ \sum_{l=1}^L (\sqrt{m} \cdot \| \Wb_l^{\top} - \Wb_l^{(0)\top} \|_{F})^{2/3}  \Bigg]^{3/2} \Bigg\} \Bigg)\nonumber\\
    &\leq \tilde \cO \Bigg( 4^L L^{3/2} \tilde R \cdot \sqrt{ \frac{ m }{n}} \Bigg).\label{eq:rademacherbound_type1}
\end{align}
We now derive the second bound on $\hat{\mathfrak{R}}_n(\cG)$, which is inspired by the proof provided in \citep{cao2019generalization}. Since $y\in \{+1,\-1\}$, $|\ell'(z)| \leq 1$ and $\ell'(z)$ is $1$-Lipschitz continuous, 
by standard empirical Rademacher complexity bounds \citep{bartlett2002rademacher,mohri2018foundations,shalev2014understanding}, we have
\begin{align*}
    \hat{\mathfrak{R}}_n(\cG) \leq  \hat{\mathfrak{R}}_n(\cF) = \EE_{\xi_i\sim \mathrm{Unif}(\{\pm 1\})} \Bigg[ \sup_{ \Wb \in \cW} \frac{1}{n} \sum_{i=1}^n \xi_i f_\Wb(\bx_i) \Bigg],
\end{align*}
where $\hat{\mathfrak{R}}_n(\cF)$ is the empirical Rademacher complexity of the function class $\cF$. 
We have
\begin{align}
    \hat{\mathfrak{R}}_n[\cF] \leq \underbrace{ \EE_{\bxi} \Bigg\{ \sup_{ \Wb \in \cW} \frac{1}{n} \sum_{i=1}^n \xi_i \big[ f_{\Wb}(\bx_i) - F_{\Wb^{(0)},\Wb}(\bx_i)\big] \Bigg\}}_{I_1} + \underbrace{\EE_{\bxi} \Bigg\{ \sup_{ \Wb \in \cW} \frac{1}{n} \sum_{i=1}^n \xi_i  F_{\Wb^{(0)},\Wb}(\bx_i) \Bigg\}}_{I_2},
\end{align}
where
$ F_{\Wb^{(0)},\Wb}(\xb) = f_{\Wb^{(0)}}(\xb) + \big\la \nabla_{\Wb}f_{\Wb^{(0)}}(\xb),  \Wb - \Wb^{(0)} \big\ra$.
For $I_1$, by Lemma 4.1 in \citep{cao2019generalizationsgd}, with probability at least $1 - \delta / 2$ we have
\begin{align*}
    I_1&\leq \max_{i\in[n]} \big|f_{\Wb}(\bx_i) - F_{\Wb^{(0)},\Wb}(\bx_i)\big| \leq \cO \big( L^{3} \tilde{R}^{4/3} m^{-1/6} \sqrt{\log(m)} \big),
\end{align*}
For $I_2$, note that $\EE_{\bxi} \big[ \sup_{ \Wb \in \cW} \sum_{i=1}^n \xi_i  f_{\Wb^{(0)}}(\bx_i) \big] = 0$. By Cauchy-Schwarz inequality we have
\begin{align*}
    I_2 &= \frac{1}{n}  \sum_{l=1}^L \EE_{\bxi} \Bigg\{  \sup_{ \|\tilde\Wb_l\|_F \leq \tilde{R} m^{-1/2 } }  \Tr \Bigg[ \tilde\Wb_l^\top \sum_{i=1}^n \xi_i \nabla_{\Wb_l}f_{\Wb^{(0)}}(\bx_i) \Bigg] \Bigg\} \\
    &\leq  \frac{\tilde{R} m^{-1/2 }}{n}  \sum_{l=1}^L \EE_{\bxi} \Bigg[ \Bigg\| \sum_{i=1}^n \xi_i \nabla_{\Wb_l}f_{\Wb^{(0)}}(\bx_i) \Bigg\|_F \Bigg].
\end{align*}
Therefore
\begin{align*}
    I_2 &\leq \frac{\tilde{R} m^{-1/2 }}{n} \sum_{l=1}^L \sqrt{ \EE_{\bxi} \Bigg[ \Bigg\| \sum_{i=1}^n \xi_i \nabla_{\Wb_l}f_{\Wb^{(0)}}(\bx_i) \Bigg\|_F^2 \Bigg]} \\
    &= \frac{\tilde{R} m^{-1/2 }}{n} \sum_{l=1}^L \sqrt{ \sum_{i=1}^n \big\| \nabla_{\Wb_l}f_{\Wb^{(0)}}(\bx_i) \big\|_F^2 } \\
    &\leq \cO\bigg( \frac{L\cdot \tilde{R}}{\sqrt{n}} \bigg),
\end{align*}
where we apply Jensen's inequality to obtain the first inequality, and 
the last inequality follows by Lemma B.3 in \citep{cao2019generalizationsgd}. 
Combining the bounds of $I_1$ and $I_2$ gives
\begin{align*}
    \hat{\mathfrak{R}}_n[\cF] \leq \tilde\cO\bigg( \frac{ L\tilde{R} }{\sqrt{n}} + \frac{ L^{3} \tilde{R}^{4/3} }{m^{1/6}} \bigg).
\end{align*}
Further combining this bound with \eqref{eq:rademacherbound_type1} and recaling $\delta$ completes the proof.
\end{proof}

\subsection{Proof of Lemma \ref{lemma:main_lemma_online}}
\begin{proof}[Proof of Lemma~\ref{lemma:main_lemma_online}]
Different from the proof of Lemma \ref{lemma:main_lemma}, online SGD only queries one data to update the model parameters in each iteration, i.e., $\Wb^{i+1} = \Wb^{i} - \eta\nabla L_{i+1}(\Wb^{(i)})$. By this update rule, we have
\begin{align}\label{eq:twoparts_online}
&\|\Wb^{(i)} -\Wb^{*} \|_{F}^2  - \|\Wb^{(i+1)} -\Wb^{{*}} \|_{F}^2 \notag\\
&\qquad =  2\eta\la\Wb^{(i)} -\Wb^{*} , \nabla_{\Wb}L_{i+1}(\Wb^{(i)})\ra - \eta^{2} \sum_{l=1}^L \|\nabla_{\Wb_l}L_{i+1}(\Wb^{(i)})\|_{F}^{2}. 
\end{align}
With exactly the same proof as \eqref{eq:part1} in the proof of Lemma~\ref{lemma:main_lemma}, we have
\begin{align}
\la\Wb^{(t)} -\Wb^{*} , \nabla_{\Wb}L_{i}(\Wb^{(t)})\ra \ge (1- 2\epsilon_{\mathrm{app}}(\tau))\ell\big(y_{i}f_{\Wb^{(t)}}(\xb_{i}) \big) - \ell\big(y_{i}F_{\Wb^{(0)},\Wb^*}(\xb_i)\big), \label{eq:part1_online}
\end{align}
for all $i = 0,\ldots, n' - 1$. By 
the fact that $-\ell'(\cdot)\le \ell(\cdot)$ and $-\ell'(\cdot)\le 1$, we have
\begin{align}
\sum_{l=1}^L \|\nabla_{\Wb_l}L_{i+1}(\Wb^{(i)})\|_{F}^2 &\leq \sum_{l=1}^L \ell\big(y_{i+1}f_{\Wb_{t}}(\xb_{i+1})\big)\cdot\|\nabla_{\Wb_l}f_{\Wb^{(i)}}(\xb_{i+1})\|_F^2 \nonumber \\
&\leq \mathcal{O} \big(LM(\tau)^2\big) \cdot L_{i+1}(\Wb^{(i)}). \label{eq:part2_online}
\end{align}
Then plugging \eqref{eq:part1_online} and \eqref{eq:part2_online} into \eqref{eq:twoparts_online} gives
\begin{align*}
&\|\Wb^{(i)} -\Wb^{*} \|_{F}^2  - \|\Wb^{(i+1)} -\Wb^{{*}} \|_{F}^2 \\
&\quad \geq (2- 4\epsilon_{\mathrm{app}}(\tau))\eta L_{i+1}(\Wb^{(i)}) - 2\eta \ell\big(y_{i}F_{\Wb^{(0)},\Wb^*}(\xb_i) \big)  -  \mathcal{O}\big(\eta^2 LM(\tau)^2 \big) L_{i+1}(\Wb^{(i)})\\
&\quad\geq (\frac{3}{2}- 4\epsilon_{\mathrm{app}}(\tau))\eta L_{i+1}(\Wb^{(i)}) - 2\eta \ell\big(y_{i}F_{\Wb^{(0)},\Wb^*}(\xb_i) \big), 
\end{align*}
where the last inequality is by $\eta = \mathcal{O}(L^{-1}M(\tau)^{-2})$ and merging the third term on the second line into the first term.
Taking telescope sum over $i=0,\dots, n'-1$, we obtain
\begin{align*}
&\|\Wb^{(0)} -\Wb^{*} \|_{F}^2 - \|\Wb^{(n')} -\Wb^{{*}} \|_{F}^2 \\
&\quad  \ge \Big(\frac{3}{2}-4\epsilon_{\mathrm{app}}(\tau)\Big)\eta \sum_{i=1}^{n'}L_{i}(\Wb^{(i-1)}) -  2\eta\sum_{i=1}^{n'}\ell\big(y_{i}F_{\Wb^{(0)},\Wb^*}(\xb_i) \big).\\
&\quad  \ge \Big(\frac{3}{2}-4\epsilon_{\mathrm{app}}(\tau)\Big)\eta \sum_{i=1}^{n'}L_{i}(\Wb^{(i-1)}) -  2\eta\sum_{i=1}^{n}\ell\big(y_{i}F_{\Wb^{(0)},\Wb^*}(\xb_i) \big).\\
&\quad  \ge \Big(\frac{3}{2}-4\epsilon_{\mathrm{app}}(\tau)\Big)\eta \sum_{i=1}^{n'}L_{i}(\Wb^{(i-1)}) -  2n\eta\epsilon_{\text{NTRF}}.
\end{align*}
This finishes the proof.
\end{proof}


\section{Experiments}\label{section:experiments}
In this section, we conduct some simple experiments to validate our theory. Since our paper mainly focuses on binary classification, we use a subset of the original CIFAR10 dataset \citep{krizhevsky2009learning}, which only has two classes of images. We train a $5$-layer fully-connected ReLU network on this binary classification dataset with different sample sizes ($n\in\{100, 200, 500, 1000, 2000, 5000, 10000\}$), and plot the minimal neural network width that is required to achieve zero training error in Figure \ref{fig1} (solid line). We also plot $\cO(n), \cO(\log^3(n)), \cO(\log^2(n))$ and $\cO(\log(n))$ in dashed line for reference. It is evident that the required network width to achieve zero training error is polylogarithmic on the sample size $n$, which is consistent with our theory.

\begin{figure}[H]
\vskip -0.1in
     \centering
     \subfigure[``cat" vs. ``dog"]{\includegraphics[width=0.48\textwidth]{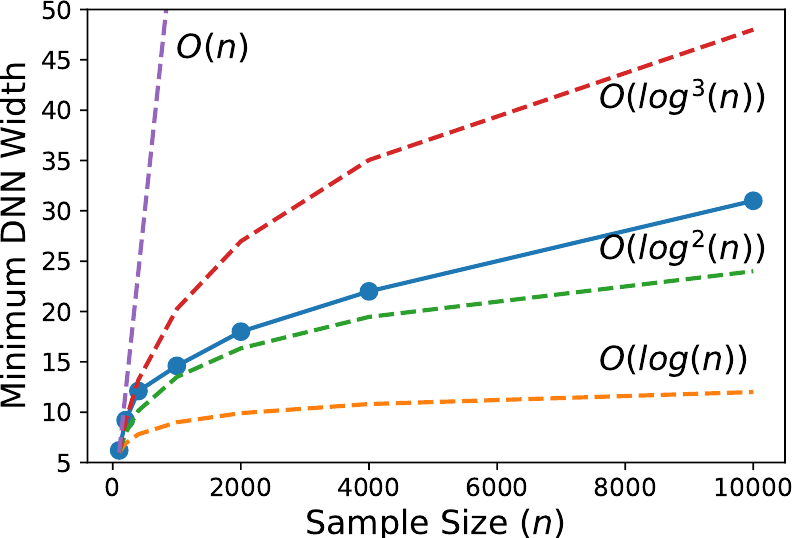}}
      \subfigure[``cat" vs. ``ship"]{\includegraphics[width=0.48\textwidth]{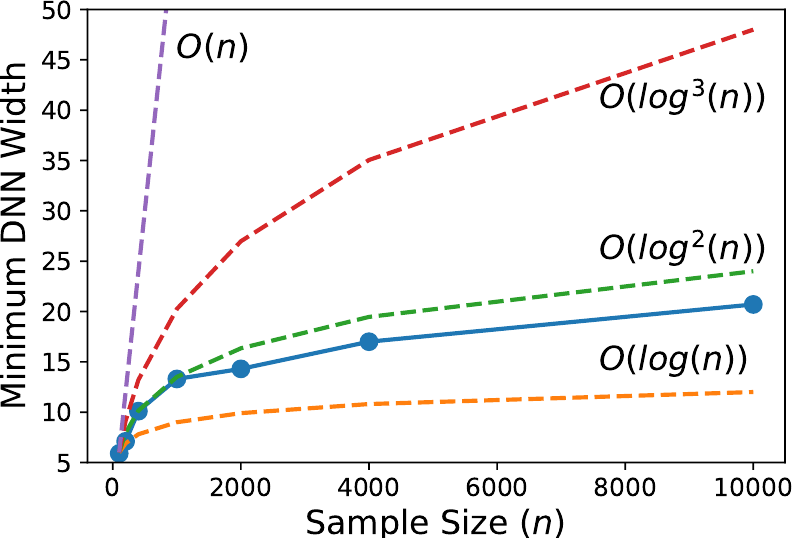}}
      \vskip -0.1in
    \caption{Minimum network width that is required to achieve zero training error with respect to the training sample size (blue solid line). The hidden constants in all $O(\cdot)$ notations are adjusted to ensure their plots (dashed lines) start from the same point.}
    \label{fig1}
    \vskip -0.15in
\end{figure}

\end{document}